\newtheorem{proposition}{Proposition}
\newtheorem{remark}{Remark}
\theoremstyle{definition}\newtheorem{definition}{Definition}
\newtheorem{exmp}{Example}
\newcommand{\obj}{\mathrm{obj}}
\newcommand{\var}{\operatorname*{Var}}
\newcommand{\bx}{\mathbf{x}}
\newcommand{\1}{\mathds{1}}
\newcommand{\const}{\mathrm{const}}
\newcommand{\tleft}{\mathrm{left}}
\newcommand{\tright}{\mathrm{right}}
\newcommand{\tparent}{\mathrm{parent}}
\def\X#1{%
        \raisebox{.9pt}{\textcircled{\raisebox{-.9pt}{#1}}}%
}
\begin{document}

%

%

\twocolumn[

\aistatstitle{Individualized and Global Feature Attributions for Gradient Boosted Trees in the Presence of $\ell_2$ Regularization}

\aistatsauthor{ Qingyao Sun }

\aistatsaddress{ University of Chicago } ]

\begin{abstract}
While $\ell_2$ regularization is widely used in training gradient boosted trees, popular individualized feature attribution methods for trees such as Saabas and TreeSHAP overlook the training procedure.
We propose Prediction Decomposition Attribution (PreDecomp), a novel individualized feature attribution for gradient boosted trees when they are trained with $\ell_2$ regularization.
Theoretical analysis shows that the inner product between PreDecomp and labels on in-sample data is essentially the total gain of a tree, and that it can faithfully recover additive models in the population case when features are independent.
Inspired by the connection between PreDecomp and total gain, we also propose TreeInner, a family of debiased global feature attributions defined in terms of the inner product between any individualized feature attribution and labels on out-sample data for each tree.
Numerical experiments on a simulated dataset and a genomic ChIP dataset show that TreeInner has state-of-the-art feature selection performance.
Code reproducing experiments is available at \url{https://github.com/nalzok/TreeInner}.
\end{abstract}

\section{INTRODUCTION}
The interpretability of machine learning algorithms has been receiving research interest due to its relevance to real-life problems in domains including genomics~\citep{Basu1943,Kumbier2018RefiningIS}, healthcare~\citep{Rudin2018OptimizedSS,10.1145/2783258.2788613}, safety~\citep{DBLP:phd/us/Kim19}, data privacy~\citep{https://doi.org/10.48550/arxiv.1806.03253}, and criminal justice~\citep{Rudin2020Ageof}.
Tree ensembles such as random forests~\citep{Breiman2001} and gradient boosted trees~\citep{10.2307/2699986} are of particular interest because they tend to have high predictive accuracy for tabular datasets and the tree structure is more interpretable compared to neural networks~\citep{lundberg2020local}.
In addition, tree ensembles are widely available in many open-source ML packages~\citep{Chen_2016,NIPS2017_6449f44a,NEURIPS2018_14491b75}.

In interpretable machine learning~\citep{46160}, individualized feature attribution (IFA) and global feature attribution (GFA) are two related topics~\citep{Lundberg2018}.
IFA attributes the prediction by a model for a particular input to each feature, and it helps to interpret the decision-making process for a given input.
Examples of IFA methods include the Saabas method~\citep{Saabas2014} which is designed for random forests, and SHAP~\citep{Lundberg2018} which is model-agnostic.
On the other hand, GFA focuses on estimating the overall importance of each feature in model inference, and it provides a way to understand the high-level relationship between input features and model prediction.
Examples of GFA methods include MDI~\citep{Breiman2001} \& MDI-oob~\citep{li2019debiased} which are designed for random forests, and permutation importance~\citep{Breiman2001} which is model-agnostic.

A common way to construct GFA from IFA is by averaging or summing the absolute value of the IFA for each feature across all samples.
However, this approach does not take into account the relationship between IFA and the fitting target, which makes the resultant GFA suffer from feature selection bias --- some features tend to be over-weighted or under-weighted, and thus their GFA is not reflective of their actual importance in the data.
Many researchers proposed ways to deal with this bias.
\citet{doi:10.1198/106186008X344522} proposed a bias-corrected impurity based on estimating the bias with pseudo data, for which the R package \textbf{ranger} provides an efficient algorithm and implementation~\citep{10.1093/bioinformatics/bty373}.
Some researchers ascribe the bias to over-fitting and address it by incorporating validation data.
In particular, \citet{Markus2022} and \citet{zhou2019unbiased} developed variants of MDI based on different impurity functions calculated on both in-bag and out-of-bag data.
\citet{li2019debiased} proposed MDI-oob, a debiased GFA calculated on out-of-bag data motivated by a new characterization of MDI.
There is also a separate line of research that focuses on growing trees without such bias, e.g. cForest~\citep{doi:10.1198/106186006X133933} and honest trees~\citep{doi:10.1080/01621459.2017.1319839}.
However, these papers focus on random forests, but there is no theoretical or empirical evidence that they also apply to gradient-boosted trees (GBT).
Additionally, these works only apply to trees trained without regularization, whereas it is common practice to apply $\ell_2$ regularization to base learners in the GBT training process, e.g. the $\ell_2$ regularization parameter defaults to 1 in \textbf{xgboost}.

Inspired by the ideas researched for random forests, we construct a novel IFA called PreDecomp and a family of debiased GFAs for gradient-boosted trees called TreeInner.
The major contributions of this paper are listed as follows:
\begin{itemize}
  \item We propose a novel IFA for $\ell_2$-regularized gradient boosted trees called Prediction Decomposition Attribution (PreDecomp) by generalizing the Saabas method.
  \item We derive an alternative analytical expression for the total gain DFA in gradient boosted trees and propose a family of debiased GFAs called TreeInner by calculating the inner product between any IFA and the labels on an out-sample dataset.
  \item We show that TreeInner materialized by PreDecomp and TreeSHAP achieves state-of-the-art AUC scores in noisy feature identification through simulation studies inspired by real data.
\end{itemize}

\subsection{Related Works}

\paragraph{Individualized Feature Attribution (IFA)}

Local surrogate models like LIME~\citep{10.1145/2939672.2939778} require sampling from a neighborhood, which is especially tricky for tabular data.
The Saabas method~\citep{Saabas2014} is only defined for random forests and does not take regularization into account.
SHAP~\citep{NIPS2017_7062,Lundberg2018,https://doi.org/10.48550/arxiv.1903.10464} has many ways of operationalization giving very different results~\citep{pmlr-v119-sundararajan20b}.
Specifically, using a marginal or interventional background distribution could distort the data manifold, using a conditional background distribution could let credit creep between correlated features, and using an interventional conditional background distribution provides a causal explanation but requires a background dataset~\citep{pmlr-v108-janzing20a}.
When it comes to implementation, KernelSHAP has exponential computational complexity in terms of explained features, whereas TreeSHAP, as a special case of Conditional Expectations Shapley, violates multiple axioms of Shapley values~\citep{pmlr-v119-sundararajan20b}.

\paragraph{Global Feature Attribution (GFA)}

There are a number of GFA methods developed for tree ensembles.
Permutation-based methods~\citep{Breiman2001} generally have high computational costs and require multiple replications to get an accurate estimation.
They also force the model to extrapolate to unrealistic samples when features are correlated~\citep{hooker2021unrestricted}.
Split-improvement scores such as total gain (also called MDI or Gini Importance in the context of random forests) and split count~\citep{Chen_2016} bias towards variables with high cardinality~\citep{strobl2007bias}.
To address the issue, multiple variants of MDI have been proposed to be debiased~\citep{li2019debiased} or unbiased~\citep{zhou2019unbiased,loecher2020unbiased,Markus2022} where ``unbiased'' means non-informative features will receive an importance score of zero in expectation.
We argue that  finite-sample unbiasedness is an unnatural requirement as the ground-truth value of feature importance is not well-defined~\citep{doi:10.1198/tast.2009.08199}: why should an unpredictive feature receive an importance score of zero instead of negative infinity?
In fact, as we show empirically in the simulation study, unbiasedness is of debatable desirability and assigning negative scores to noisy features can potentially improve feature selection performance.
On the other hand, SHAP feature importance, i.e. the average absolute value of SHAP across the samples for each feature, is merely heuristic.
Additionally, it also suffers from a strong dependence on feature cardinality in both random forests~\citep{10.1007/978-3-031-14463-9_8} and gradient boosted trees~\citep{e24050687}.
Apart from model-based GFAs, there also exists non-parametric techniques that don't require a fitted model to estimate GFA~\citep{PARR2021100146,parr2020nonparametric}.

We recently learned that there is a concurrent work by \citet{10.1007/978-3-031-14463-9_8} which discusses a similar idea as ours.
However, they focus on SHAP and random forests, whereas we propose a whole family of GFAs parameterized by the choice of IFA for gradient boosted trees trained with $\ell_2$ regularization.

\subsection{Organization}

The remainder of the paper is organized as follows. In Section \ref{S2}, we introduce a novel IFA called Prediction Decomposition Attribution (PreDecomp) and study its theoretical properties. In Section \ref{S3}, we derive an alternative expression of total gain in terms of PreDecomp and propose TreeInner, a family of GFA based on the formula. In Section \ref{S4}, we study our method's empirical performance and demonstrate that it has state-of-the-art feature selection performance. In Section \ref{S5}, we conclude our paper.

\section{INDIVIDUALIZED FEATURE ATTRIBUTIONS}\label{S2}

For simplicity, we focus on regression trees trained with mean squared error (MSE) loss.
However, our result also applies to classification trees and Gini index since variance and Gini index are equivalent with one-hot encoding as shown in \citet{li2019debiased}.
Without loss of generality, we also assume all input features are numerical.

\subsection{Background and Notations}

Assuming the feature space is $\mathbb R^p$, an individualized feature attribution is a function on $\mathcal{M} \times \mathbb R^p \to \mathbb R^p$, where $\mathcal{M}$ is the set of all possible models in interest.
A global feature attribution is a function on $\mathcal{M} \to \mathbb R^p$.

Denote $\mathcal{D}_\text{train} = \{(\bx_i, y_i)\}_{i=1}^N$ to be a training dataset containing iid samples, where $\bx_i = (\bx_{i,1}, \ldots, \bx_{i,p}) \in \mathbb{R}^p$ is the input features and $y_i \in \mathbb{R}$ is the response.
We may also refer to a single specimen with $(X, Y)$ where $X = (X_1, \ldots, X_p)$. 
Similarly, denote $\mathcal{D}_\text{valid}$ to be a validation dataset, which can be the validation split reserved for hyperparameter tuning for early stopping.
We define noisy features as features that are independent of the outcome.
Non-noisy features are also called relevant features.

Gradient boosted trees \citep{10.2307/2699986} is an ensemble of decision trees whose prediction is the sum prediction across all trees.
Each $t$ represents a hyper-rectangle $R_t$ in the feature space $\mathbb{R}^p$, and all samples in $R_t$ share a common prediction $w_t$.
For node $t$, we define $v(t)$ to be the feature it split on,  $t^\tleft$ to be its left child and $t^\tright$ to be its right child.
Denote $M$ to be the total number of trees or boosting rounds.
For any $m \in [M] = \{1, 2, \ldots, M\}$, denote the prediction of tree $m$ to be $f_{m}(X)$, and the prediction of the ensemble consisting of the first $m$ trees to be $f_{[m]}(X) = \sum_{q \le m} f_{q}(X)$.

Gradient boosted trees are trained sequentially and greedily: tree by tree, and split by split.
For each split $t$ in tree $m$, the objective function consists of a training loss term and a regularization term.
Particularly, denote $L(y, \hat{y}) = \frac{1}{2}(y - \hat{y})^2$ to be the MSE loss, $\lambda$ to be the $\ell_2$ penalty parameter, and the objective function is $\obj_t(w) = \sum_{\bx_i \in R_t} L(y_i, f_{[m]}(\bx_i)) + \frac{1}{2}w^2$\footnote{Unless otherwise specified, $\sum_{\bx_i \in R_t} \cdot$ means $\sum_{\bx_i \in R_t \cap \mathcal{D}_\text{train}} \cdot$}, where $f_{[m]}(\bx_i) = f_{[m-1]}(\bx_i) + f_{m}(\bx_i)$.

With second-order Taylor expansion, we can construct the gradient at sample $X$ when growing tree $m$
\begin{equation}
G_{i,m} = \frac{\partial L\left(y_i, \hat{y}_i\right)}{\partial \hat{y}_i}\bigg|_{\hat{y}_i = f_{[m-1]}(\bx_i)} = f_{[m-1]}(\bx_i) - y_i.
\end{equation}
and the hessian
\begin{equation}
\begin{aligned}
H_{i,m} = \frac{\partial^2L\left(y_i, \hat{y}_i\right)}{\partial \hat{y}_i^2}\bigg|_{\hat{y}_i = f_{[m-1]}(\bx_i)} = 1.
\end{aligned}
\end{equation}

Define $p_m(t)$ to be the prediction at a leaf node $t$ in tree $m$. By considering the second-order optimality condition \citep{Chen_2016}, we know that for a leaf node $t$
\begin{equation}
\begin{aligned}\label{eq:pmt-leaf}
p_m(t) =& \arg\min_w \obj_t(w) \\
=& - \frac{\sum_{\bx_i\in R_t}G_{i,m}}{\sum_{\bx_i\in R_t}H_{i,m} + \lambda} \\
=& \frac{\sum_{\bx_i\in R_t} \left(y_i - f_{[m-1]}(\bx_i)\right)}{|R_t| + \lambda}.
\end{aligned}
\end{equation}

In practice, people often consider an additional learning rate parameter $\alpha$ for shrinkage effects, resulting in
\begin{equation}
p_m(t) = \alpha \frac{\sum_{\bx_i\in R_t} \left(y_i - f_{[m-1]}(\bx_i)\right)}{|R_t| + \lambda}.
\end{equation}

Throughout this paper, we assume $\lambda$ and $\alpha$ are constant for all trees.

\subsection{Defining \texorpdfstring{$p_m(t)$}{p\_m(t)} for Inner Nodes}

The Saabas method was initially proposed in the context of random forests.
In particular, the original blog post by \citet{Saabas2014} breaks down the prediction value of a tree model by tracking value changes along the prediction path.
We will explain the details later in Definition~\ref{def:predecomp}.
A key step in the Saabas method is to assign a prediction value $p_m(t)$ to all nodes $t$ in tree $m$, and use the change of prediction values to attribute features used by each split.
Normally for tree models, only leaves are used for prediction and thus have a well-defined $p_m(t)$.
In this subsection, we discuss the definition of $p_m(t)$ when $t$ is an inner node.

The vanilla Saabas method calculates the prediction of each inner node by computing the average response of all training samples falling within it.
\begin{equation}
\mathring{p}_m(t) := \frac{1}{|R_t|} \sum_{\bx_i\in R_t}y_i.
\end{equation}

For random forests, this way of assigning values to an inner node is equivalent to treating it as a leaf node by ignoring all of its children nodes.
Coincidentally, doing so also aligns with the idea that leaf node predictions are to minimize the mean squared error without any regularization term.
That is to say, we have
\begin{equation}\label{eq:alternative-p}
\arg\min_w \obj(w) \stackrel{\text{\X1}}{=} \mathring{p}_m(t) \stackrel{\text{\X2}}{=} \frac{1}{|R_t|} \sum_{\bx_i\in R_t}f_m(\bx_i).
\end{equation}

Unfortunately, when leaf values are trained with regularization terms such as $\ell_2$, the average fitted value of all training samples within a node will be different from treating the inner node as a leaf node.
In other words, \X1 and \X2 in the above equation cannot hold simultaneously.
We choose to maintain \X1 in Eq.~\eqref{eq:alternative-p} and let
\begin{equation}\label{eq:pmt}
\begin{aligned}
\hat{p}_m(t) =& \arg\min_w \obj(w) \\
=& -\alpha \frac{\sum_{\bx_i\in R_t}G_{i,m}}{\sum_{\bx_i\in R_t}H_{i,m} + \lambda} \\
=& \alpha \frac{\sum_{\bx_i\in R_t} \left(y_i - f_{[m-1]}(\bx_i)\right)}{|R_t| + \lambda}.
\end{aligned}
\end{equation}

Hereafter, we denote $p_m(t) := \hat{p}_m(t)$.

\paragraph{Alternative Definition}
For the sake of completeness, we note that some software packages implement the Saabas method with an alternative definition $\tilde{p}_m(t)$ for inner nodes as the expected output of the model~\citep{lundberg2020local}.
Specifically, $\tilde{p}_m(t)$ calculates the expected prediction value for all training samples falling within an inner node by enforcing \X2 in Eq.~\eqref{eq:alternative-p}, i.e.
\begin{equation}
\tilde{p}_m(t) = \frac{1}{|R_t|} \sum_{\bx_i\in R_t}f_m(\bx_i).
\end{equation}

While $\tilde{p}_m(t)$ and $\hat{p}_m(t)$ are asymptotically equivalent, there is some difference in the finite-sample case.
We endorse $\hat{p}_m(t)$ for its nice analytical property to be demonstrated later, and because it provides a robust tree-level bias regardless of the tree structure.
See Section~\ref{S6} in the supplementary material for further discussion and an example.

\subsection{Generalizing Saabas Method}

In this subsection, we propose a generalization of the Saabas method to gradient boosted trees trained with $\ell^2$ regularization in terms of $p_m(t)$.

Now we can define $f_{m,k}(X)$, the contribution of feature $k$ in the prediction of tree $m$.

\begin{definition}[Prediction Decomposition Attribution (PreDecomp)]\label{def:predecomp}
For tree $m$ and feature $k$,
\begin{equation}
\begin{aligned}
f_{m,k}(X) = \sum_{t\in I_m:v(t)=k} & \left[p_m(t^\tleft)\1(X\in R_{t^{\tleft}})\right. \\
&+ p_m(t^\tright)\1(X\in R_{t^{\tright}}) \\
&- \left.p_m(t)\1(X\in R_{t})\right].
\end{aligned}
\end{equation}

Additionally, for the first $m$ trees,
\begin{equation}
f_{[m],k}(X) = \sum_{q \le m} f_{q,k}(X)
\end{equation}
\end{definition}

While PreDecomp looks superficially similar to the vanilla Sabbas method, we note that the key difference lies in our choice of $p_m(t)$.
The time complexity of evaluating $f_{m,k}(X)$ is the same as doing inference with the tree $m$.

Next, we point out that the prediction of each tree is just a sum of $f_{m,k}(X)$ and a constant term which can be understood as a tree-level bias.

\begin{proposition}[Local Accuracy]\label{prop:local-accuracy}
For any $m \in [M]$, we have the following equation, where $p_m(\text{root})$ is the prediction value for the root node.
\begin{equation}
\begin{aligned}
f_m(X) =& \sum_{k=1}^p f_{m,k}(X) + p_m(\text{root}) \\
=& \sum_{k=1}^p f_{m,k}(X) - \alpha\frac{\sum_{i=1}^n G_{i,m}}{\sum_{i=1}^n H_{i,m} + \lambda},
\end{aligned}
\end{equation}
\end{proposition}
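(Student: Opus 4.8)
The plan is to establish the first equality by a telescoping argument along the prediction path of $X$ in tree $m$, and then read off the second equality by specializing Eq.~\eqref{eq:pmt} to the root node.

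First I would fix an input $X$ and let $t_0 = \mathrm{root}, t_1, \ldots, t_L$ be the unique sequence of nodes in tree $m$ with $X \in R_{t_j}$, where $t_L$ is the leaf reached by $X$ and each $t_{j+1}$ is either $t_j^\tleft$ or $t_j^\tright$. Since every inner node splits on exactly one feature, the double sum $\sum_{k=1}^p \sum_{t \in I_m : v(t)=k}$ in Definition~\ref{def:predecomp} is simply a sum over all inner nodes $t \in I_m$. For this fixed $X$, the bracketed summand vanishes unless $X \in R_t$, i.e.\ unless $t \in \{t_0, \ldots, t_{L-1}\}$, the inner nodes on the path; and for such $t = t_j$ exactly one child contains $X$, namely $t_{j+1}$, so the bracket collapses to $p_m(t_{j+1}) - p_m(t_j)$. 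Hence $\sum_{k=1}^p f_{m,k}(X) = \sum_{j=0}^{L-1}\bigl(p_m(t_{j+1}) - p_m(t_j)\bigr)$, which telescopes to $p_m(t_L) - p_m(t_0) = p_m(t_L) - p_m(\mathrm{root})$.

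Next I would note that within tree $m$ the prediction for $X$ is exactly the leaf value $w_{t_L}$, which by Eq.~\eqref{eq:pmt-leaf} equals $p_m(t_L)$. Therefore $f_m(X) = p_m(t_L) = \sum_{k=1}^p f_{m,k}(X) + p_m(\mathrm{root})$, which is the first line of the claim. For the second line, I apply Eq.~\eqref{eq:pmt} with $t = \mathrm{root}$: since $R_{\mathrm{root}} = \mathbb{R}^p$ contains all training samples, $\sum_{\bx_i \in R_{\mathrm{root}}} G_{i,m}$ is the full sum $\sum_i G_{i,m}$, and likewise for the Hessian, so $p_m(\mathrm{root}) = -\alpha \frac{\sum_i G_{i,m}}{\sum_i H_{i,m} + \lambda}$.

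The only step needing care is the bookkeeping in the telescoping argument — verifying that, for fixed $X$, the feature-and-split indicators pick out precisely the edges of the root-to-leaf path so that the sum collapses cleanly — but there is no genuine analytical obstacle here; the second equality is an immediate substitution.
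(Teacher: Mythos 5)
Your proof is correct and follows essentially the same route as the paper's: both rest on the observation that the intermediate $p_m(t)$ terms cancel (the paper cancels each inner node against its parent's term across all of $I_m$, while you first restrict to the root-to-leaf path of $X$ and telescope along it), leaving $f_m(X) - p_m(\mathrm{root})$, after which the second equality is the same direct substitution of Eq.~\eqref{eq:pmt} at the root.
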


\begin{proof}
See Section~\ref{S8.1} in the supplementary material.
\end{proof}

In the following theorem, we point out that our characterization can faithfully recover additive models when features are independent.


\begin{proposition}[Additive Model Consistency with Infinite Samples]\label{prop:additive}
Assume $\mathbb E(Y) = 0$ for identifiability, and suppose that
\begin{itemize}
    \item There are infinite training samples (population case);
    \item $X_1, \ldots, X_p$ are independent;
    \item The regression function $f^*(X)=\mathbb E(Y|X)$ is additive, i.e., there exists uni-variate functions $h_1,\ldots,h_p$ such that $\mathbb E_X (h_k(X_k)) = 0$ and for any $X \in \mathbb R^p$
    \begin{equation}
    f^*(X) = \sum_{k=1}^p h_k(X_k).
    \end{equation}
\end{itemize}
then for any $1 \le k \le p$, the IFA $f_{[M],k}$ can consistently recover $h_k$ as the number of trees $M \to \infty$:
\begin{equation}\label{Eq:prop2}
    \lim_{M\to\infty} \mathbb E (f_{[M], k}(X) - h_k(X_j))^2 = 0.
\end{equation}
\end{proposition}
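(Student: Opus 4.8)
The plan is to work entirely in the population ($N\to\infty$) regime. There the $\ell_2$ term in Eq.~\eqref{eq:pmt} contributes only a vanishing $O(1/|R_t|)$ correction, so for every node $t$ of every tree $p_m(t)=\alpha\,\mathbb E[\rho_m(X)\mid X\in R_t]$, where $\rho_m(X):=f^*(X)-f_{[m-1]}(X)$ is the (conditional-mean) residual fed to tree $m$. I would prove by induction on $m$ that $\rho_m$ is additive with mean-zero components, $\rho_m=\sum_{k=1}^p h_k^{(m)}$ with $\mathbb E[h_k^{(m)}(X_k)]=0$; the base case $m=1$ is the hypothesis $f^*=\sum_k h_k$, so $h_k^{(1)}=h_k$. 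The whole argument hinges on the fact that, since each $R_t=\prod_j I_{t,j}$ is a product of intervals and $X_1,\dots,X_p$ are independent, $\mathbb E[h_j^{(m)}(X_j)\mid X\in R_t]=\mathbb E[h_j^{(m)}(X_j)\mid X_j\in I_{t,j}]$ depends only on the interval that the path to $t$ carves out of coordinate $j$.

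Granting additivity of $\rho_m$, a leaf value becomes $p_m(\ell)=\alpha\sum_k\mathbb E[h_k^{(m)}(X_k)\mid X_k\in J_m^{(k)}(X)]$, where $J_m^{(k)}(X)$ is the finest interval of coordinate $k$ cut out by tree $m$ at the leaf containing $X$; hence $f_m$ is itself additive, $f_m(X)=\sum_k\phi_k^{(m)}(X_k)$ with $\phi_k^{(m)}(x):=\alpha\,\mathbb E[h_k^{(m)}(X_k)\mid X_k\in J_m^{(k)}(x)]$, which has mean zero by the tower rule. Plugging $p_m(t)=\alpha\sum_j\mathbb E[h_j^{(m)}\mid X_j\in I_{t,j}]$ into Definition~\ref{def:predecomp}, and noting that for $X\in R_t$ only the $k$-th interval changes from $t$ to the child of $t$ that $X$ falls into, the sum over the splits on feature $k$ lying above $X$'s leaf telescopes along the nested $k$-intervals and collapses to exactly $f_{m,k}(X)=\phi_k^{(m)}(X_k)$ (the root contributes $\mathbb E[h_k^{(m)}]=0$). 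Consequently $\rho_{m+1}=\rho_m-f_m=\sum_k(h_k^{(m)}-\phi_k^{(m)})$, and since the additive/mean-zero decomposition is unique under independence, $h_k^{(m+1)}=h_k^{(m)}-\phi_k^{(m)}$, which closes the induction and yields the telescoping identity $f_{[M],k}=\sum_{m\le M}\phi_k^{(m)}=h_k-h_k^{(M+1)}$.

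It follows that $f_{[M],k}(X)-h_k(X_k)=-h_k^{(M+1)}(X_k)$, so by orthogonality of mean-zero functions of independent coordinates
\[
\mathbb E\big(f_{[M],k}(X)-h_k(X_k)\big)^2=\big\lVert h_k^{(M+1)}\big\rVert_{L^2}^2\;\le\;\sum_{j=1}^p\big\lVert h_j^{(M+1)}\big\rVert_{L^2}^2=\big\lVert\rho_{M+1}\big\rVert_{L^2}^2 .
\]
Thus the proposition reduces to showing the boosting residual vanishes, $\lVert\rho_M\rVert_{L^2}\to0$. For this I would use that the within-leaf conditional mean is the $L^2$-orthogonal projection $\Pi_m$ of $\rho_m$ onto the span of tree $m$'s leaf indicators, so $f_m=\alpha\Pi_m\rho_m$ and $\lVert\rho_{m+1}\rVert^2=\lVert\rho_m\rVert^2-\alpha(2-\alpha)\lVert\Pi_m\rho_m\rVert^2$; summing shows $\lVert\rho_m\rVert^2$ is nonincreasing with $\sum_m\lVert\Pi_m\rho_m\rVert^2<\infty$, so the greedy ``gain'' $\lVert\Pi_m\rho_m\rVert\to0$. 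Because tree $m$'s first split is the globally best single split, $\lVert\Pi_m\rho_m\rVert$ dominates the projection of $\rho_m$ onto every stump, so $\rho_m$ becomes asymptotically orthogonal to all stumps and hence to the closed span of all stumps, which — indicators of intervals being $L^2$-dense — is precisely the additive mean-zero subspace that contains every $\rho_m$; a weak-compactness argument then forces the only subsequential limit to be $0$.

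I expect the fourth step — upgrading ``the greedy gain tends to zero'' to ``$\lVert\rho_m\rVert\to0$'' — to be the main obstacle: this is exactly the consistency of $L_2$-boosting viewed as a weak/relaxed greedy algorithm, and it needs $f^*$ to lie in the closure of the span of the base learners (true here, as $f^*$ is additive and piecewise-constant functions are dense), $0<\alpha<2$ (in particular the usual $\alpha\le1$), and some care in passing from weak to norm convergence of the residual; the remaining steps, especially the telescoping identity $f_{[M],k}=h_k-h_k^{(M+1)}$, are bookkeeping once the independence-driven factorization $\mathbb E[h_j^{(m)}\mid X\in R_t]=\mathbb E[h_j^{(m)}\mid X_j\in I_{t,j}]$ is in hand.
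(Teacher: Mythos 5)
Your proposal follows essentially the same route as the paper's proof: work in the population limit so that $p_m(t)$ becomes a conditional mean of the current residual, use independence plus the product structure of the cells $R_t$ to show (by induction on $m$) that each PreDecomp component $f_{m,k}$ is a univariate, mean-zero function of $X_k$, and then split the ensemble's $L^2$ error coordinatewise by orthogonality so that consistency of the whole ensemble forces consistency of each component. Your telescoping identity $f_{[M],k}=h_k-h_k^{(M+1)}$ and the bound $\mathbb E\bigl(f_{[M],k}(X)-h_k(X_k)\bigr)^2=\lVert h_k^{(M+1)}\rVert^2\le\lVert\rho_{M+1}\rVert^2$ are just a more explicit, constructive form of the paper's induction (``$p_{m+1}(t)-p_{m+1}(t_\tparent)$ is a univariate function of the split feature'') and of its equality $\mathbb E(f_{[M]}-f^*)^2=\sum_k\mathbb E(f_{[M],k}-h_k)^2$, so that part matches and is, if anything, cleaner. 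The one place where you diverge is the step you yourself flag as the obstacle: the paper simply asserts that in the population case the boosted ensemble is $L^2$-consistent, $\lim_{M\to\infty}\mathbb E(f_{[M]}(X)-f^*(X))^2=0$, with a one-sentence appeal to boosting driving training error to zero, whereas you attempt to derive it via the projection recursion $\lVert\rho_{m+1}\rVert^2=\lVert\rho_m\rVert^2-\alpha(2-\alpha)\lVert\Pi_m\rho_m\rVert^2$ and a weak-greedy / dense-dictionary argument, and you correctly note that passing from ``greedy gain tends to zero'' to norm convergence of the residual requires the nontrivial machinery of greedy-algorithm (matching-pursuit / $L_2$-boosting) consistency. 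Relative to the paper's own standard of rigor this is not a gap on your side --- you are attempting to prove what the paper takes for granted --- but be aware that your sketch of that step is not yet complete, and that simply citing population-level $L_2$-boosting consistency (as the paper implicitly does) would suffice to finish the argument.
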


\begin{proof}
See Section~\ref{S8.2} in the supplementary material.
Note that without the assumptions $\mathbb E(Y) = 0$ and $\mathbb E_X (h_k(X)) = 0$, the global bias can be absorbed into an arbitrary $h_k(X)$, which leads to unidentifiablity.
\end{proof}

Indeed, each $X_k$ can also be interpreted as a group of features.
There can be feature dependency and interaction within each group, as long as the groups are mutually independent and have no interaction with each other.

\section{GLOBAL FEATURE ATTRIBUTIONS}\label{S3}

Based on our understanding of PreDecomp, we can construct a family of debiased GFA.
We begin with analyzing the relationship between PreDecomp and total gain, which serves as a motivation for our construction.

\subsection{Alternative Formula for Total Gain}

In this subsection, we provide an alternative analytical expression for total gain.

\begin{definition}[Gain]
Gain, or the amount of decreased loss due to the split at node $t$ in tree $m$, is defined as the sum of all the gains due to splits using feature $k$.
\begin{equation}
\Delta_m(t) = \begin{aligned}[t]
& \frac{(\sum_{\bx_i\in R_{t^\tleft}} G_{i,m})^2}{\sum_{\bx_i\in R_{t^\tleft}} H_{i,m}+\lambda} + \frac{(\sum_{\bx_i\in R_{t^\tright}} G_{i,m})^2}{\sum_{\bx_i\in R_{t^\tright}} H_{i,m}+\lambda} \\
-& \frac{(\sum_{\bx_i\in R_{t}} G_{i,m})^2}{\sum_{\bx_i\in R_{t}} H_{i,m}+\lambda}.
\end{aligned}
\end{equation}
\end{definition}

Note again that calculating gain implicitly requires a background dataset $\mathcal{D}$, and that by summing over $\bx_i\in R_{t}$ we really mean $\bx_i\in R_{t} \cap \mathcal{D}$.
In the original definition of gain, we have $\mathcal{D} := \mathcal{D}_{\text{train}}$.

Next, we define total gain for individual trees and forests.

\begin{definition}[Total Gain]
The total gain in tree $m$ with respect to feature $k$ is defined as
\begin{equation}
\mathrm{TotalGain}_{m,k} = \sum_{t\in I_m:v(t)=k}\Delta_m(t).
\end{equation}

The total gain across the whole forest with respect to feature $k$ is defined as
\begin{equation}
\begin{aligned}
\mathrm{TotalGain}_k &= \sum_{m=1}^M \mathrm{TotalGain}_{m,k} \\
&= \sum_{m=1}^M\sum_{t\in I_m:v(t)=k}\Delta_m(t).
\end{aligned}
\end{equation}
\end{definition}

Then we have the following proposition:
\begin{proposition}\label{prop:gain}
The total gain feature attribution for tree $m$ and feature $k$ can be written as:
\begin{equation}\label{eq:gain}
\begin{aligned}
\mathrm{TotalGain}_{m,k} =& -\alpha^{-1} \sum_{i=1}^N  f_{m, k}(\bx_i) G_{i,m} \\
=& \alpha^{-1} \sum_{i=1}^N f_{m, k}(\bx_i) \left(y_i - f_{[m-1]}(\bx_i)\right).
\end{aligned}
\end{equation}
\end{proposition}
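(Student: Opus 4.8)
The plan is to expand the in-sample inner product $\sum_{i=1}^N f_{m,k}(\bx_i) G_{i,m}$ using Definition~\ref{def:predecomp} and recognize each resulting summand as a scaled copy of the gain $\Delta_m(t)$. First I would observe that the two expressions on the right-hand side of \eqref{eq:gain} are trivially equal, since $G_{i,m} = f_{[m-1]}(\bx_i) - y_i$; hence it suffices to prove $\mathrm{TotalGain}_{m,k} = -\alpha^{-1}\sum_{i=1}^N f_{m,k}(\bx_i) G_{i,m}$, where the background dataset defining $\Delta_m$ is $\mathcal{D}_\text{train}$, matching the sum over $i = 1,\dots,N$.

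Next I would substitute the definition of $f_{m,k}$ and interchange the (finite) sums over training points and over split nodes $t \in I_m$ with $v(t)=k$. Because $\1(\bx_i \in R_s)$ restricts the inner sum to $R_s$, this gives
\[
\sum_{i=1}^N f_{m,k}(\bx_i) G_{i,m} = \sum_{t\in I_m:v(t)=k} \left[ p_m(t^\tleft)\sum_{\bx_i\in R_{t^\tleft}} G_{i,m} + p_m(t^\tright)\sum_{\bx_i\in R_{t^\tright}} G_{i,m} - p_m(t)\sum_{\bx_i\in R_t} G_{i,m}\right].
\]
The crucial algebraic step is the identity, valid for \emph{every} node $s$ of tree $m$ by our choice of $p_m$ in Eq.~\eqref{eq:pmt} (which assigns inner nodes the same second-order optimality value as leaves),
\[
p_m(s)\sum_{\bx_i\in R_s} G_{i,m} = -\alpha\,\frac{\left(\sum_{\bx_i\in R_s} G_{i,m}\right)^2}{\sum_{\bx_i\in R_s} H_{i,m}+\lambda}.
\]
Applying this at $s = t^\tleft$, $s = t^\tright$, and $s = t$, the bracket collapses to exactly $-\alpha\,\Delta_m(t)$. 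Summing over $t \in I_m$ with $v(t)=k$ yields $\sum_{i=1}^N f_{m,k}(\bx_i) G_{i,m} = -\alpha\sum_{t\in I_m:v(t)=k}\Delta_m(t) = -\alpha\,\mathrm{TotalGain}_{m,k}$, and dividing by $-\alpha$ gives the first equality; the second then follows from $G_{i,m} = f_{[m-1]}(\bx_i) - y_i$.

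I do not expect a genuine obstacle: once the $p_m(s)$-identity above is in hand, the argument is a one-line rearrangement. The only points requiring care are bookkeeping ones — ensuring the background dataset in $\Delta_m$ is the training set so the index sets line up, and invoking the optimality formula for $p_m$ at inner nodes, which is legitimate precisely because we maintained \X1 in Eq.~\eqref{eq:alternative-p} rather than adopting the average-response definitions $\mathring p_m(t)$ or $\tilde p_m(t)$. This is also the step where the $\ell_2$ penalty $\lambda$ and the learning rate $\alpha$ enter, and it is what makes the identity clean despite regularization.
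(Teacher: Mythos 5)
Your proposal is correct and essentially mirrors the paper's own proof: the paper likewise uses the identity $p_m(s)\sum_{\bx_i\in R_s}G_{i,m} = -\alpha\,(\sum_{\bx_i\in R_s}G_{i,m})^2/(\sum_{\bx_i\in R_s}H_{i,m}+\lambda)$ at the node $t$ and its two children to rewrite $\alpha\Delta_m(t)$ as $\sum_i -G_{i,m}\bigl[p_m(t^\tleft)\1(\bx_i\in R_{t^\tleft})+p_m(t^\tright)\1(\bx_i\in R_{t^\tright})-p_m(t)\1(\bx_i\in R_t)\bigr]$ and then sums over splits on feature $k$. You merely run the computation in the opposite direction (from the inner product to the gain), which is an equivalent presentation of the same argument.
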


\begin{proof}
See Section~\ref{S8.3} in the supplementary material.
\end{proof}

As a corollary, we immediately have
\begin{equation}
\begin{aligned}
\mathrm{TotalGain}_k =& \sum_{m=1}^M \mathrm{TotalGain}_{m,k} \\
=& -\alpha^{-1} \sum_{i=1}^N \sum_{m=1}^M  f_{m, k}(\bx_i) G_{i,m} \\
=& \alpha^{-1} \sum_{i=1}^N \sum_{m=1}^M  f_{m, k}(\bx_i) \left(y_i - f_{[m-1]}(\bx_i)\right).
\end{aligned}
\end{equation}

It was pointed out by \citet{li2019debiased} that in random forests MDI can be written as the in-sample covariance, in addition to the inner product, between the response $y_i$ and an IFA similar to $f_{m,k}(\bx_i)$.
To compare our results, we make the following remark

\begin{remark}\label{remark:sum-to-zero}
When $\lambda = 0$, for any $k$,
\begin{equation}
\sum_{\bx \in R_\text{root}} f_m(\bx) = 0.
\end{equation}
\end{remark}

\begin{proof}
See Section~\ref{S8.4} in the supplementary material.
\end{proof}

When $\lambda = 0$, Proposition~\ref{prop:gain} can also be interpreted as the in-sample covariance between the gradient and $f_{m,k}(\bx_i)$ by Remark~\ref{remark:sum-to-zero}.
For $\lambda > 0$, we note that it is still approximately a covariance since the regularization term becomes negligible with a sufficiently large sample size.

\subsection{Bridging IFA and GFA}

Inspired by Proposition~\ref{prop:gain} which shows the total gain of each tree is just the inner product of the label and the IFA for the in-sample data, we generalize the idea to any IFA and any dataset (possibly out-sample).

\begin{definition}[TreeInner]
For any IFA $r_{m,k}(\bx)$ defined for tree $m$ and feature $k$, we can define a tree-level GFA as follows
\begin{equation}
\begin{aligned}
\operatorname{GFA}_r(m,k) =& -\alpha^{-1}\sum_{i=1}^N r_{m,k}(\bx_i) G_i \\
=& \alpha^{-1}\sum_{i=1}^N r_{m,k}(\bx_i) (y_i - f_{[m-1]}(\bx_i)).
\end{aligned}
\end{equation}

Based on $\operatorname{GFA}_r(m,k)$, we can define a GFA for the entire forest
\begin{equation}
\begin{aligned}
\operatorname{GFA}_r(k) =& \sum_{m=1}^M \operatorname{GFA}_r(m,k) \\
=& \alpha^{-1}\sum_{i=1}^N \sum_{m=1}^M \left(r_{m,k}(\bx_i) (y_i - f_{[m-1]}(\bx_i))\right).
\end{aligned}
\end{equation}
\end{definition}

Note that we can evaluate the importance of a feature on the out-sample data, which mitigates a common feature selection bias existing for previous GFA methods~\citep{li2019debiased,loecher2020unbiased}.
In other words, we sum over $\bx_i \in R_t \cap \mathcal{D}_\text{valid}$ instead of $\bx_i \in R_t \cap \mathcal{D}_\text{train}$.

\paragraph{Alternative GFA Family}
We have considered another way of inducing GFA.
It works by first aggregating the tree-level IFA into a forest-level IFA, and then calculating its inner product with the label.
In particular,
\begin{definition}[ForestInner]
\begin{equation}\label{eq:forest-inner}
\begin{aligned}
& \widetilde{\operatorname{GFA}}_r(k) \\
=& \alpha^{-1} \sum_{i=1}^N \left(\sum_{m=1}^M r_{m,k}(\bx_i)\right) \left(\sum_{m=1}^M y_i - f_{[m-1]}(\bx_i) \right) \\
=& \alpha^{-1} \sum_{i=1}^N \left(\sum_{m=1}^M r_{m,k}(\bx_i)\right) y_i.
\end{aligned}
\end{equation}
\end{definition}

When using PreDecomp as the IFA, $\widetilde{\operatorname{GFA}}_f(k)$ can be shown to be asymptotically unbiased for noisy features in the population case.
However, it empirically has worse performance, so we move the discussion about it to Section~\ref{S7} and the relevant experimental result to Section~\ref{S9} in the supplementary material.

\section{SIMULATION STUDY}\label{S4}
In this section, we conduct several numerical experiments to evaluate the empirical performance of our theoretical results.

\subsection{Datasets}

We adopt the two synthetic datasets introduced by \citet{li2019debiased}, each containing a regression task and a binary classification task.
The authors replicated each experiment 40 times and report the average result, but due to constraints on the computational budget, we only replicate the experiments 20 times.
The data-generating process is described as follows.
Note that this simulation setting has a low signal-to-noise ratio by design.

\paragraph{Simulated Dataset}
The first simulated dataset consists of 1000 (training) + 1000 (validation) samples and 50 features.
All features are discrete, and the $j$-th feature takes values in $0, 1, \ldots, j$ with equal probability.
A set $S$ of 5 features is randomly sampled from the first 10 features as relevant features, and the other 45 are treated as noise.
This setup represents a challenging case since split-improvement scores bias toward features with high cardinality, whereas the relevant features here all have low cardinality.
All samples are i.i.d. and all features are independent.
The labels are generated using the following rules:

\begin{description}[style=nextline]
    \item[Classification]
    $P(Y = 1 | X) = \textrm{Logistic}\left(\frac{2}{5}\sum_{j \in S} X_j/j - 1\right).$
    \item[Regression]
    $\begin{aligned}
        Y &= \frac{1}{5}\sum_{j\in S} X_j/j + \epsilon\text{, where} \\
        \epsilon &\sim \mathcal N\left(0, 100\cdot \var\left(\frac{1}{5}\sum_{j\in S}X_j/j\right)\right).
    \end{aligned}$
\end{description}

\paragraph{Genomic ChIP Dataset}
The second dataset is derived from a ChIP-chip and ChIP-seq dataset measuring the enhancer status of 80 biomolecules at 3912 (training) + 3897 (validation) regions of the Drosophila genome.
The heterogeneity and dependencies among features make it a more realistic and more challenging dataset for feature selection problems.
To evaluate feature selection in the ChIP data, all features are min-max scaled between 0 and 1.
Next, a set $S$ of 5 features is randomly sampled as relevant features, and the other 75 are treated as noise.
Then, all noisy features are permuted to break their dependencies with relevant features and each other, so that a noisy feature will not be considered relevant due to its correlation with a relevant feature.
The labels are generated using the following rules:

\begin{description}[style=nextline]
    \item[Classification]
    $P(Y = 1 | X) = \textrm{Logistic}\left(\frac{2}{5}\sum_{j \in S} X_j - 1\right)$.
    \item[Regression]
    $\begin{aligned}
        Y &= \frac{1}{5}\sum_{j\in S} X_j + \epsilon\text{, where} \\
        \epsilon &\sim \mathcal N\left(0, 100\cdot \var\left(\frac{1}{5}\sum_{j\in S}X_j\right)\right).
    \end{aligned}$
\end{description}

\subsection{Setup}

Throughout the experiments, we use a fork\footnote{Our fork will be open-sourced after the review period. For the time being, please use the pre-built wheel file in the supplementary material to reproduce our experiment.} of \textbf{xgboost} release 1.6.2 to calculate PreDecomp.
The hyperparameters below are described following the naming convention of the \textbf{xgboost} API.

The experiments were executed in parallel on five TPU v3-8 virtual machines accessed through Google's TPU Research Cloud program.
Each machine is equipped with 96 virtual cores and 340GiB main memory.

To explore the performance of each GFA family under different over-fitting severity, we train the model with different hyperparameters as listed in Table~\ref{table:hyperparameters}.
Since a grid search would be prohibitively expensive, we alter the hyperparameters one at a time and study their marginal effect.
In particular, we assign a standard value to every parameter when it is not the variable in interest.
For example, when considering the impact of $\texttt{eta}$, we fix $\texttt{max\_depth}$ to 4, $\texttt{min\_child\_weight}$ to 1, $\texttt{num\_boost\_round}$ to 400, $\texttt{reg\_lambda}$ to 1, and vary $\texttt{eta}$ from $10^{-5}$ all the way to $10^{0}$.
The standard values are chosen such that the model roughly has the best predictive performance for all tasks on the validation set.

\begin{table}[ht]
\begin{center}
\caption{Hyperparameters}\label{table:hyperparameters}
    \begin{tabular}{r|ll}
    \toprule
    \textbf{Name} & \textbf{Standard} & \textbf{Alternatives} \\
    \midrule
    \texttt{eta} & $10^{-2}$ & $\begin{aligned}[t](&10^{-5}, 10^{-4}, 10^{-3},\\ &10^{-2}, 10^{-1}, 10^{0})\end{aligned}$ \\
    \texttt{max\_depth} & 4 & $(2, 4, 6, 8, 10)$ \\
    \texttt{min\_child\_weight} & 1 & $(0.5, 1, 2, 4, 8)$ \\
    \texttt{num\_boost\_round} & 400 & $\begin{aligned}[t](&200, 400, 600, \\&800, 1000)\end{aligned}$ \\
    \texttt{reg\_lambda} & 1 & $(0, 0.1, 1, 10, 100)$ \\
    \bottomrule
    \end{tabular}
\end{center}
\end{table}

Due to space limitations, we only show the result with varying $\texttt{max\_depth}$ in the main text.
Additional plots showing the result of sweeping over other hyperparameters can be found in the supplementary material.

\subsection{Verification of the Proposed Formula for Total Gain}
To empirically verify the correctness of Proposition~\ref{prop:gain}, we compare the result obtained with Eq.~\eqref{eq:gain} against the output of \textbf{xgboost}'s built-in method \texttt{get\_score(importance\_type="total\_gain")}.
Since we are not interested in the absolute magnitude of total gain, the result of each method are normalized such that they sum to one, i.e. have unit $\ell_1$ norm in $\mathbb R^p$.
Figure~\ref{fig:error} illustrates the maximum absolute difference between these two methods as a measure of the deviation.
The error is mostly in the order $10^{-6}$, and it is likely caused by the inexactness of floating-point arithmetic.

\begin{figure}[ht]
\vspace{.3in}
\includegraphics[width=0.45\textwidth]{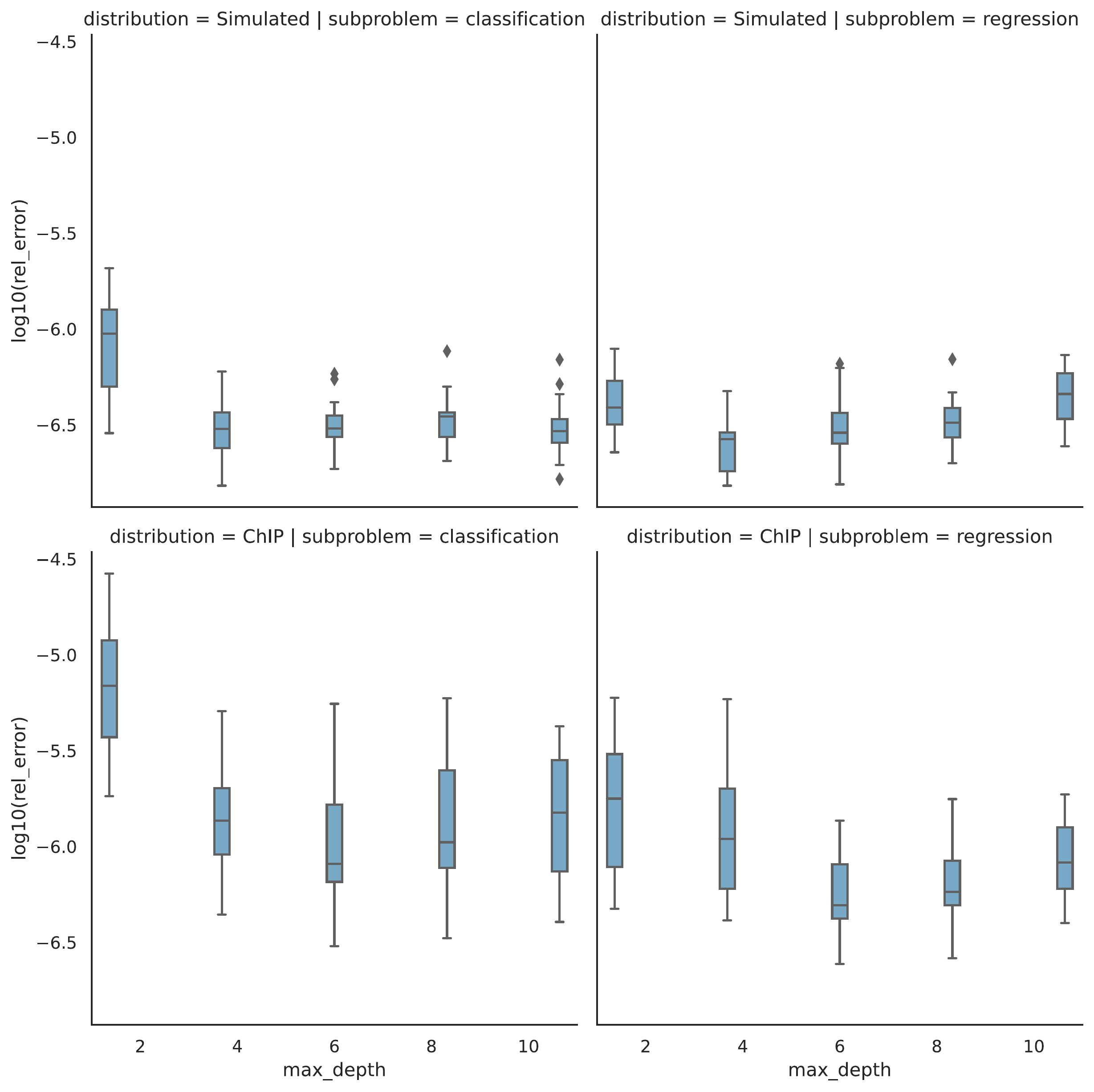}
\vspace{.3in}
\caption{Logarithm of the maximum absolute difference between the normalized total gain calculated with our methodology and the built-in method.}
\label{fig:error}
\end{figure}

\subsection{Noisy Feature Identification}
In this experiment, we evaluate the faithfulness of TreeInner to identify noisy features.
Labeling the noisy features as 0 and the relevant features as 1, we can evaluate the scores produced by a GFA in terms of its area under the receiver operating characteristic curve (AUC).
Ideally, the GFA values for relevant features would be higher than those for noisy features, so we expect to get an AUC close to 1.

We compare three GFA families:
\texttt{TreeInner} is the debiased GFA family we proposed;
\texttt{Abs} is the conventional method of calculating the average magnitude of an IFA;
\texttt{Permutation} is the permutation importance.
We evaluate the GFA on both in-sample data and out-sample data, and each family is materialized with two IFAs: PreDecomp and TreeSHAP.
Additionally, the permutation importance is included as a baseline.
The AUC scores for all methods with a model trained with standard hyperparameters can be found in Table~\ref{table:standard-auc} in Section~\ref{S9} in the supplementary material.
Note that the table includes an additional GFA family \texttt{ForestInner} as defined in Eq.~\eqref{eq:forest-inner}, which is omitted here to avoid cluttering the plot.

Figure~\ref{fig:auc} illustrates the change in AUC scores as the model becomes more and more complex with increasing \texttt{max\_depth}.
Our proposed GFA family \texttt{TreeInner} dominates other GFAs in all tasks except classification on ChIP, in which case the error bars are very wide.
Note that using out-sample data generally gives a higher AUC, which agrees with our belief that feature selection bias comes from over-fitting.
Finally, observe that when calculating \texttt{TreeInner} on out-sample data, PreDecomp generally gives the same or higher AUC compared to TreeSHAP.
\begin{figure}[ht]
\vspace{.3in}
\includegraphics[width=0.45\textwidth]{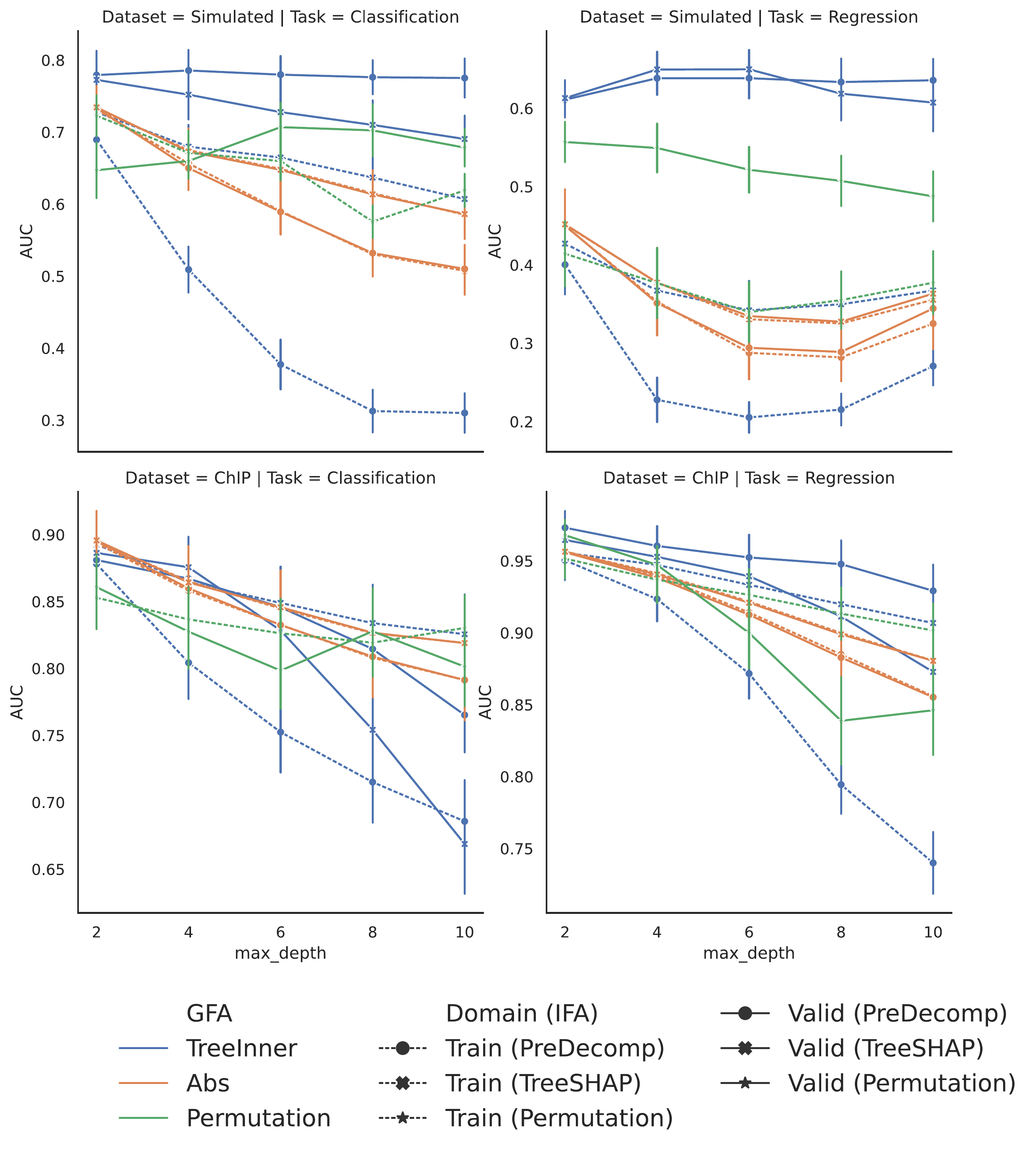}
\vspace{.3in}
\caption{
AUC score for noisy feature identification, averaged across 20 replications.
The error bars correspond to one standard error.
Higher is better.
}
\label{fig:auc}
\end{figure}

We attempt to explain the efficacy of \texttt{TreeInner} by plotting the average normalized score for noisy features in Figure~\ref{fig:score-noisy}.
The scores are normalized by scaling them to have unit $\ell_2$ norm in $\mathbb R^p$.
Note that scores calculated on in-sample data are generally positive despite being calculated for noisy features, a consequence of over-fitting.
When evaluated on out-sample data, only \texttt{TreeInner} can consistently produce negative scores, hence separating the scores for noisy and relevant features.
We hypothesize that it is the lack of finite-sample unbiasedness that makes \texttt{TreeInner} a better feature selector.
Moreover, the trend becomes clearer as the number of trees increases, suggesting that \texttt{TreeInner} could even benefit from model over-fitting.

\begin{figure}[ht]
\vspace{.3in}
\includegraphics[width=0.45\textwidth]{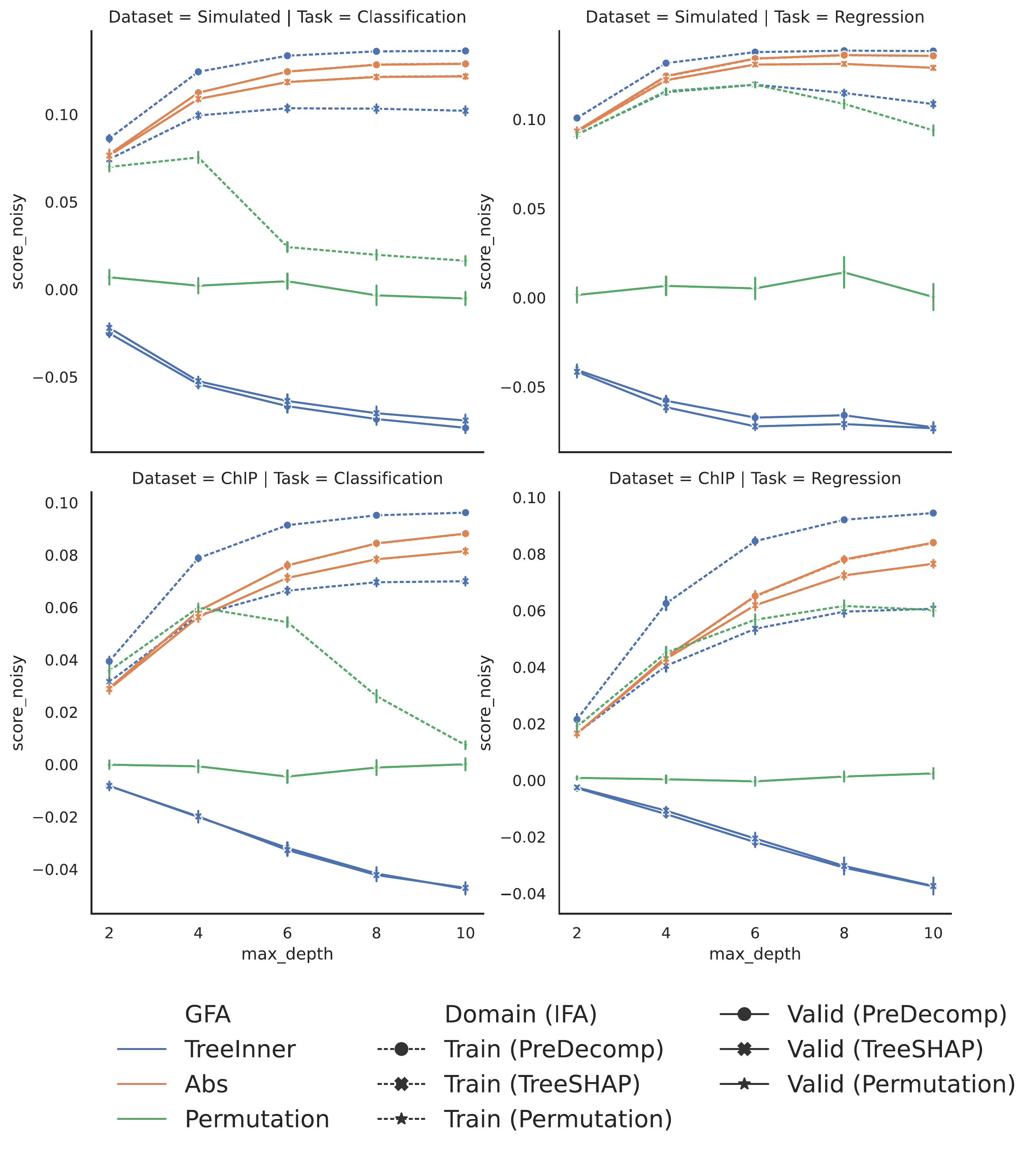}
\vspace{.3in}
\caption{
The average normalized score for the noisy features, averaged across 20 replications.
The error bars correspond to one standard error.
Lower is better.
}
\label{fig:score-noisy}
\end{figure}


\section{DISCUSSION}\label{S5}
We proposed PreDecomp, a novel IFA for gradient boosted trees in the presence of $\ell_2$ regularization, and TreeInner, a principled way to induce GFA from any IFA and background dataset.
When partnered together, PreDecomp and TreeInner show state-of-the-art feature selection performance on two real-life inspired synthetic datasets.

\subsection{Limitations}
In the presence of $\ell_1$ regularization, while one can still define PreDecomp as $\arg\min_w \obj(w)$, we are unable to express total gain in terms of PreDecomp in that case due to the soft thresholding effect.

Like interventional TreeSHAP, our PreDecomp also requires access to a background dataset.
Additionally, the time complexity of calculating total gain with Eq.~\eqref{eq:gain} is $O(N)$ where $N$ is the number of training samples, whereas using the definition gives $O(1)$ complexity.

While the experiment provides some empirical insight into how the choice of IFA could affect the performance of TreeInner, further analysis is required to understand the relationship in detail.

\subsection{Societal Impact}
Gradient boosted trees are one of the most popular algorithms for tabular data.
As the winning solution to multiple machine learning challenges in medicine, fraud detection, travel safety, and business, to name just a few, they are responsible for making many high-stake decisions~\citep{Chen_2016}.
The IFA we proposed, i.e. PreDecomp, makes these decisions more transparent and trustworthy by identifying the key features which lead to the judgment.
The family of GFA we proposed, i.e. TreeInner, can be used for feature selection, which can serve as building blocks for more predictive and robust algorithms.

Additionally, we can use gradient boosted trees as a lens to understand the importance of features in a dataset by first fitting a model and then calculating GFA with TreeInner.

\subsubsection*{Acknowledgements}
This paper is a continuation of the work I did with Dr. Karl Kumbier and Dr. Yu (Hue) Wang from the Yu Group at UC Berkeley.
I thank Prof. Bin Yu and members of the Yu Group for the inspiration and discussion.
In particular, I thank Hue for his guidance and significant contribution to this work.
Additionally, I thank Dr. Sayna Ebriahimi and Dr. Kevin Murphy from Google Brain for their feedback.
Finally, I thank Google Cloud for donating the computational resources via the TRC program.

\bibliography{refs}

\onecolumn
\aistatstitle{Supplementary Materials}

\section{COMPARING \texorpdfstring{$\hat{p}_m(t)$}{\hat\{p\}\_m(t)} AND \texorpdfstring{$\tilde{p}_m(t)$}{\hat\{p\}\_m(t)}}\label{S6}
To provide some insight into the relationship between $\hat{p}_m(t)$ and $\tilde{p}_m(t)$, we note that both can be equivalently defined with a recurrence formula
\begin{equation}
\begin{aligned}
\hat{p}_m(t) &= \begin{cases}
\alpha \frac{\sum_{\bx_i\in R_t}\left(y_i - f_{[m-1]}(\bx_i)\right)}{|R_t| + \lambda},&\text{$t$ is a leaf node} \\
\begin{aligned}[t]
& \frac{|R_\tleft|+\lambda}{|R_t|+\lambda}\hat{p}_m(R_\tleft) \\
&+ \frac{|R_\tright|+\lambda}{|R_t|+\lambda}\hat{p}_m(R_\tright)
\end{aligned},&\text{$t$ is an inner node}
\end{cases}. \\
\tilde{p}_m(t) &= \begin{cases}
\alpha \frac{\sum_{\bx_i\in R_t}\left(y_i - f_{[m-1]}(\bx_i)\right)}{|R_t| + \lambda},&\text{$t$ is a leaf node}\\
\begin{aligned}[t]
& \frac{|R_\tleft|}{|R_t|}\tilde{p}_m(R_\tleft) \\
&+ \frac{|R_\tright|}{|R_t|}\tilde{p}_m(R_\tright)
\end{aligned},&\text{$t$ is an inner node}
\end{cases}.
\end{aligned}
\end{equation}

It is easy to see that $\hat{p}_m(t) = \tilde{p}_m(t)$ if and only if $\lambda = 0$.
However, for any fixed tree structure, their difference converges to zero as the sample size goes to infinity.

In \textbf{xgboost} and \textbf{shap.TreeExplainer}, a Saabas-like IFA induced by $\tilde{p}_m(t)$ is used as an approximation of TreeSHAP~\citep{Chen_2016,Lundberg2018}.
While $\tilde{p}_m(t)$ has its own merit, we argue that it is better to define IFA in terms of $\hat{p}_m(t)$.
As we showed in Proposition~\ref{prop:gain}, our $\hat{p}_m(t)$ has some nice analytical relationship with total gain.
Additionally, $\hat{p}_m(t)$ provides a robust prediction value for the root node, as illustrated by Example~\ref{exmp:root}.
Moreover, $\tilde{p}_m(t)$ has to be computed during inference time since the leaf values are unknown during training, while $\hat{p}_m(t)$ can be computed and cached in training time.

\begin{exmp}[$\hat{p}_m(t)$ offers robust tree-level bias]
\label{exmp:root}
To further illustrate why we favor $\hat{p}_m(t)$ over $\tilde{p}_m(t)$, consider the following artificial data set consisting of three specimens A, B, and C:
\begin{table}[h]
\begin{center}
\caption{Sample Dataset}\label{table:sample-data}
    \begin{tabular}{r|rrr}
    \toprule
    \textbf{Specimen} & $X_1$ & $X_2$ & $Y$ \\
    \midrule
    A & 0 & 0 & 0 \\ 
    B & 0 & 1 & 1 \\ 
    C & 1 & 0 & -1 \\
    \bottomrule
    \end{tabular}
\end{center}
\end{table}

If we train a tree with $\lambda = 1$ and maximum depth 1, the two possible tree structures are shown in Figure~\ref{fig:tree-structures}.
Since $X_1$ and $X_2$ are equally predictive in this example, the tree can split on either variable, so the training procedure would result in Structure I and Structure II with equal possibility depending on the random seed.
In practice, the instability of tree structure could stem from perturbations in $Y$ and column sub-sampling.

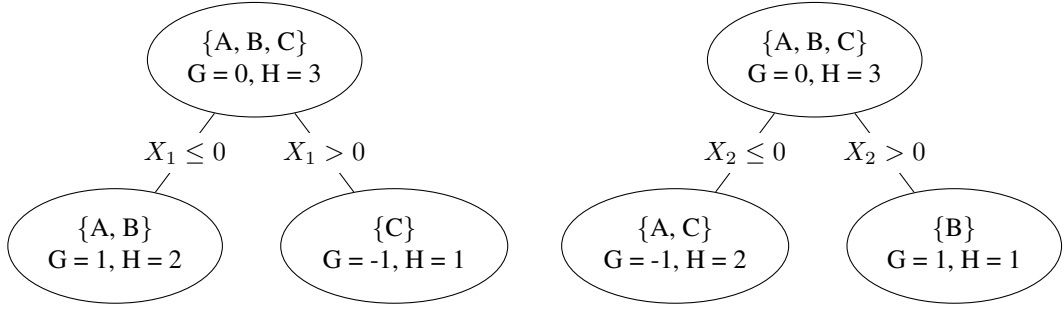
\begin{figure}
\centering
    \begin{adjustbox}{valign=t}
    \begin{forest} baseline, for tree={draw, ellipse, align=center, s sep+=15pt, l sep+=15pt}
        [{\{A, B, C\} \\ G = 0, H = 3}
            [{\{A, B\} \\ G = 1, H = 2}, edge label={node[midway,fill=white]{$X_1 \le 0$}}]
            [{\{C\} \\ G = -1, H = 1}, edge label={node[midway,fill=white]{$X_1 > 0$}}]
        ]    
    \end{forest}
    \end{adjustbox}\qquad
    \begin{adjustbox}{valign=t}
    \begin{forest} baseline, for tree={draw, ellipse, align=center, s sep+=15pt, l sep+=15pt}
        [{\{A, B, C\} \\ G = 0, H = 3}
            [{\{A, C\} \\ G = -1, H = 2}, edge label={node[midway,fill=white]{$X_2 \le 0$}}]
            [{\{B\} \\ G = 1, H = 1}, edge label={node[midway,fill=white]{$X_2 > 0$}}]
        ]    
    \end{forest}
    \end{adjustbox}
    \label{fig:tree-structures}
    \caption{Structure I (left) and Structure II (right)}
\end{figure}

We calculate $\hat{p}_m(t)$ and $\tilde{p}_m(t)$ for the nodes in both trees in Table~\ref{table:pmt}.
We focus on $\hat{p}_m(\text{root})$, since it is the constant term in Proposition~\ref{prop:local-accuracy}, which means it can be interpreted as a tree-level bias.
Observe that when using $\hat{p}_m(t)$, the prediction value of the root node is always 0 regardless of the tree structure, whereas $\tilde{p}_m(t)$ gives less stable result by alternating between $-\frac{1}{18}$ and $\frac{1}{18}$.
Indeed, this is always true for $\hat{p}_m(t)$, as we have $\hat{p}_m(\text{root}) = -\alpha\frac{\sum_i G_{i,m}}{\sum_i H_{i,m} + \lambda}$ regardless of the tree hyperparameters and structure.

\begin{table}[h]
\caption{Values of $p_m(t)$}\label{table:pmt}
\begin{center}
\begin{tabular}{lrr|rr}
\toprule
\textbf{Definition} & \multicolumn{2}{c}{$\hat{p}_m(t)$} & \multicolumn{2}{c}{$\tilde{p}_m(t)$} \\
\cmidrule(lr){2-3}\cmidrule(lr){4-5}
\textbf{Structure} & Structure I & Structure II & Structure I & Structure II \\
\midrule
root & 0 & 0 & $\frac{1}{18}$ & $-\frac{1}{18}$ \\
left & $\frac{1}{3}$ & $-\frac{1}{3}$ & $\frac{1}{3}$ & $-\frac{1}{3}$ \\
right & $-\frac{1}{2}$ & $\frac{1}{2}$ & $-\frac{1}{2}$ & $\frac{1}{2}$ \\
\bottomrule
\end{tabular}
\end{center}
\end{table}

\end{exmp}

\section{ALTERNATIVE GFA FAMILY}\label{S7}

We have the following proposition for the $\widetilde{\operatorname{GFA}}_f(j)$ proposed in Eq.~\eqref{eq:forest-inner}.

\begin{proposition}[Asymptotic Unbiasedness]
Without loss of generality, we assume $X_1, \ldots, X_k$ are relevant features, and $X_{k+1}, \ldots, X_p$ are noisy features.
Suppose that
\begin{itemize}
    \item We are using PreDecomp as the IFA, i.e. $r_{m,j}(\bx_i) = f_{m,j}(\bx_i)$;
    \item There are infinite training samples (population case);
    \item $(X_1, \ldots, X_k), X_{k+1}, \ldots, X_p$ are mutually independent, i.e.\begin{equation}
    \begin{aligned}
    & P(X_1, X_2, \ldots, X_k, X_{k+1}, \ldots, X_p) \\
    =& P(X_1, X_2, \ldots, X_k) P(X_{k+1}) \ldots P(X_p)
    \end{aligned}
    \end{equation}
\end{itemize}
Then for any noisy feature $k + 1 \le j \le p$, we have
\begin{equation}
    \lim_{M \to \infty}\widetilde{\operatorname{GFA}}_f(j) = 0.
\end{equation}
\end{proposition}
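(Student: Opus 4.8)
The plan is to reduce the statement to the $L^2$ consistency result of Proposition~\ref{prop:additive}, read in its grouped form. First I would pass to the population version of $\widetilde{\operatorname{GFA}}_f(j)$: since we are in the infinite-sample regime, $|R_t|/N \to \mathbb P(X\in R_t)$ and the fixed $\lambda$ in the denominator of $p_m(t)$ is negligible against $|R_t|\asymp N\,\mathbb P(X\in R_t)$, so $p_m(t)\to \alpha\,\mathbb E\!\left[Y - f_{[m-1]}(X)\mid X\in R_t\right]$ and, after the usual normalization, $\alpha\,\widetilde{\operatorname{GFA}}_f(j)$ becomes $\mathbb E\!\left[f_{[M],j}(X)\,Y\right]$, where $f_{[M],j}=\sum_{m=1}^M f_{m,j}$ is PreDecomp aggregated over the first $M$ trees. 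It therefore suffices to show $\mathbb E[f_{[M],j}(X)\,Y]\to 0$ as $M\to\infty$ for every noisy $j$.

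Next I would invoke Proposition~\ref{prop:additive} in the grouped form licensed by the remark that follows it: take the single group $S_0=\{1,\dots,k\}$ of relevant features together with the singleton groups $\{j\}$ for $k+1\le j\le p$. Under the hypotheses these blocks are mutually independent, and because the noisy features carry no information about the outcome the regression function splits trivially as $f^*(X)=\mathbb E(Y\mid X)=h_{S_0}(X_{S_0})$, i.e. the component attached to each noisy block is identically $0$ (and hence mean-zero). The grouped analogue of Proposition~\ref{prop:additive} then yields $\lim_{M\to\infty}\mathbb E\!\left[f_{[M],j}(X)^2\right]=0$, i.e. PreDecomp for a noisy feature vanishes in $L^2(P_X)$ as the ensemble grows. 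If $\mathbb E(Y)\ne 0$ I would first center the response: by Proposition~\ref{prop:local-accuracy} the global mean is absorbed entirely into the per-tree bias terms $p_m(\mathrm{root})$ and does not enter any $f_{m,j}$, so this reduction is without loss of generality. Finally, Cauchy--Schwarz gives $\bigl|\mathbb E[f_{[M],j}(X)\,Y]\bigr|\le \sqrt{\mathbb E[f_{[M],j}(X)^2]}\,\sqrt{\mathbb E[Y^2]}\to 0$, which together with the first step proves $\lim_{M\to\infty}\widetilde{\operatorname{GFA}}_f(j)=0$.

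The step I expect to be the main obstacle is the invocation of the grouped version of Proposition~\ref{prop:additive}: that proposition is stated for an additive regression function over the \emph{original} coordinates, so one must make precise — and verify against the supplement's proof — the grouped statement in which a single block of possibly dependent, possibly interacting relevant features is treated as one ``coordinate'' while each noisy coordinate is assigned the zero function. A related subtlety is that ``noisy $=$ independent of the outcome'' must be used in the strong form $(Y,X_{-j})\perp X_j$ (equivalently, $f^*$ genuinely does not depend on $X_j$ and $X_j$ does not affect the conditional law of $Y$), which is what forces $h_{\{j\}}\equiv 0$; under the stated mutual-independence assumption this holds as soon as $Y$ is a function of the relevant features and exogenous noise. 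Granting that, there is also a slightly stronger self-contained route worth noting as a remark: by induction on $m$ one shows that in the population case $f_{[m-1]}$ never depends on a noisy feature, so any split on $X_j$ has zero gain and equal left/right predictions, whence $f_{m,j}\equiv 0$ for all $m$ and $\widetilde{\operatorname{GFA}}_f(j)$ is in fact identically $0$ for every $M$.
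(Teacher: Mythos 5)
Your proposal follows essentially the same route as the paper's own proof: treat the relevant features $(X_1,\ldots,X_k)$ as a single super-feature so that $f^*(X)=\mathbb E(Y\mid X_S)+0+\cdots+0$ is additive, invoke Proposition~\ref{prop:additive} (in its grouped reading) to get $\mathbb E\bigl[f_{[M],j}(X)^2\bigr]\to 0$ for each noisy $j$, and conclude that $\widetilde{\operatorname{GFA}}_f(j)\to 0$. Your Cauchy--Schwarz step and the explicit centering remark are in fact slightly more careful than the paper's informal passage from $L^2$ convergence to the vanishing of the inner product, but the underlying argument is the same.
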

\begin{proof}
We consider the subset of relevant features collectively as a super-feature $X_S = (X_1, X_2, \ldots, X_k)$.
The regression function has the form
\begin{equation}
\begin{aligned}
f^*(X) =& \mathbb E (Y|X) \\
=& \mathbb E (Y|X_S, X_{k+1}, \ldots, X_{p}) \\
=& \mathbb E (Y|X_S) + \underbrace{0 + \ldots + 0}_{p-k}
\end{aligned}
\end{equation}
Note that this is an additive model, so by Proposition~\ref{prop:additive}, for $k < j \le p$ we have
\begin{equation}
\begin{aligned}
\mathbb E (f_{[M], j}(X) - 0)^2 \to& 0. \\
f_{[M], j}(X) \to& 0.
\end{aligned}
\end{equation}
As a result,
\begin{equation}
\begin{aligned}
\widetilde{\operatorname{GFA}}_f(j) =& \alpha^{-1} \sum_{i=1}^N \left(\sum_{m=1}^M f_{m,j}(\bx_i)\right) y_i \\
=& \alpha^{-1} \sum_{i=1}^N f_{[M], j}(\bx_i) y_i \\
\to& \alpha^{-1} \sum_{i=1}^N 0 \cdot y_i \\
=& 0
\end{aligned}
\end{equation}
\end{proof}

As a corollary, we have shown that total gain is asymptotically unbiased in the population case.
This makes sense since the bias in split-improvement scores is essentially due to the model struggling to tell signal from noise and overfits to the latter, but access to infinite samples means there is effectively no noise.

\section{PROOFS}\label{S8}

\subsection{Proof for Proposition~\ref{prop:local-accuracy}}\label{S8.1}
\begin{proof}
\begin{equation}
\begin{aligned}
\sum_{k=1}^p f_{m,k}(X) =& \sum_{k=1}^p \sum_{t\in I_m:v(t)=k} \begin{aligned}[t]
& \left[p_m(t^\tleft)\1(X\in R_{t^{\tleft}})\right. \\
&+ p_m(t^\tright)\1(X\in R_{t^{\tright}}) \\
&- \left.p_m(t)\1(X\in R_{t})\right]\end{aligned} \\
=& \sum_{t\in I_m} \begin{aligned}[t]
& \left[p_m(t^\tleft)\1(X\in R_{t^{\tleft}})\right. \\
&+ p_m(t^\tright)\1(X\in R_{t^{\tright}}) \\
&- \left.p_m(t)\1(X\in R_{t})\right]\end{aligned}
\end{aligned}
\end{equation}
Since all inner nodes $t\in I_m$ except the root node is also the child of another node, the term $p_m(t)\1(\bx\in R_{t})$ will get canceled by the corresponding term of its parent.
The only remaining terms are the leaf nodes and the root node.
Denote $J_m$ to be the set of all leaf nodes in tree $m$, and then
\begin{equation}
\begin{aligned}
\sum_{j=1}^p f_{m,j}(X) &= \sum_{t\in J_m} p_m(t)\1(X\in R_{t}) - p_m(\text{root}) \\
f_{m}(X) + p_m(\text{root}) &= \sum_{t\in J_m} p_m(t)\1(X\in R_{t}) \\
    &= f_m(X).
\end{aligned}
\end{equation}
That completes the proof.
\end{proof}

\subsection{Proof for Proposition~\ref{prop:additive}}\label{S8.2}
\begin{proof}
First of all, we observe that with an infinite sample size and the number of trees approaching infinity, i.e., $M \to \infty$,
the GBT is consistent,
\begin{equation}
    \lim_{M\to\infty} \mathbb E (f_{[M]}(X) - f^*(X))^2 = 0,
\end{equation}
This is because gradient boosted trees are guaranteed to approach zero mean squared error for the training set and with infinite samples, i.e. when the training set is the same as the population.

Now we would like to show that for tree $m$, $f_{m,k}(X)$ is only a function of $X_k$.
In other words, with slight abuse of notations, we have $f_{m,k}(X) = f_{m,k}(X_k)$.
We prove this by induction.
When $m=0$, $f_{[m],k}(X) = 0$ which holds trivially.
Suppose the conclusion holds for $m$: $f_{[m],k}(X) = f_{[m],k}(X_k)$.
For tree $m+1$, to show $f_{m+1,k}(X)$ only depends on $X_k$, by Definition \ref{def:predecomp}, we only need to show that for each node $t$ and its parent $t_{\tparent}$, if $t_{\tparent}$ splits on feature $j=v(t_{\tparent})$, then $p_{m+1}(t) - p_{m+1}(t_{\tparent})$ is a uni-variate function of $X_j$.
Recall that in Eq.~\eqref{eq:pmt-leaf}, $p_m(t) =\frac{\sum_{\bx_i\in R_t} \left(y_i - f_{[m-1]}(\bx_i)\right)}{|R_t| + \lambda}$, but we have an infinite number of samples, so the lambda term in the denominator can be ignored, resulting in $p_{m+1}(t) = \frac{\sum_{\bx_i\in R_t} \left(y_i - f_{[m-1]}(\bx_i)\right)}{|R_t|} = \mathbb E (f^*(X)-f_{[m]}(X)|X\in R_t)$.
Since $\mathbb E(Y) = 0$, we know $p_m(\text{root}) = 0$.
By Proposition~\ref{prop:local-accuracy} and our induction assumption, $f_{[m]}(X)$ is additive, i.e.
\begin{equation}
f_{[m]}(X) = \sum_{k=1}^p f_{[m],k}(X) + 0 = \sum_{k=1}^p f_{[m],k}(X_k).
\end{equation}
Given the assumption that $f^*(X)$ is additive, we also have
\begin{equation}
f^*(X) = \sum_{k=1}^p h_k(X_k).
\end{equation}

To simplify the following deduction, let's assume $R_t = \1_{X_1 < t_1, X_2 < t_2}$. The general case follows the same logic.
Then we have
\begin{equation}
\begin{aligned}
& \mathbb E(f^*(X) - f_{[m]}(X)|X_1 < t_1, X_2 < t_2) \\
=& \sum_{k=1}^p\mathbb E(h_k(X_k) - f_{[m],k}(X_k) | X_1 < t_1, X_2 < t_2).
\end{aligned}
\end{equation}
By the independence of $X_1, \ldots, X_p$, the above equation equals to 
\begin{equation}
\begin{aligned}
& \mathbb E (h_1(X_1) - f_{[m],1}(X_1) | X_1 < t_1) \\
&+ \mathbb E (h_2(X_2) - f_{[m],2}(X_2) | X_2 < t_2) \\
&+ \underbrace{\sum_{k \ge 3}\mathbb E(h_k(X_k) - f_{[m],k}(X_k)}_{\const}.
\end{aligned}
\end{equation}
Without loss of generality, let's assume $t_{\tparent}$ is $\1_{X_1 < t_1}$.
Following the same deduction, we have
\begin{equation}
\begin{aligned}
p_{m+1}(t_{\tparent}) = \mathbb E (h_1(X_1) - f_{[m],1}(X_1) | X_1 < t_1) + \const.
\end{aligned}
\end{equation}
Therefore, we have $p_{m+1}(t) - p_{m+1}(t_{\tparent}) = \mathbb E (h_2(X_2) - f_{[m],2}(X_2) | X_2 < t_2)$ which is a uni-variable function of $X_2$.
That concludes our proof that $f_{m,k}(X)$ only depends on $X_k$.
By induction, we know $f_{[m],k}(X) = f_{[m],k}(X_k)$ for any $k=1,\ldots, p$.

Finally, we are ready to show Eq.~\eqref{Eq:prop2}.
Since $X_j$ are independent and $f_{[m],k}(X) = f_{[m],k}(X_k)$, we know the following equality:
\begin{equation}
\mathbb E (f_{[M]}(X) - f^*(X))^2 = \sum_{k=1}^p \mathbb E (f_{[M],k}(X_k) - h_k(X_k))^2.
\end{equation}

Since the left-hand side goes to zero as $M\to \infty$, the right-hand side must go to zero as well.
Therefore, for any $k$,
\begin{equation}
\mathbb E (f_{[M],k}(X_k) - h_k(X_k))^2 \to 0.
\end{equation}
That completes the proof.
\end{proof}

\subsection{Proof for Proposition~\ref{prop:gain}}\label{S8.3}

\begin{proof}
Recall the definition of $p_m(t)$ in Eq.~\eqref{eq:pmt}.
\begin{equation}
\begin{aligned}
\alpha\Delta(t) =& \alpha\frac{(\sum_{\bx_i\in R_{t^\tleft}} G_{i,m})^2}{\sum_{\bx_i\in R_{t^\tleft}} H_{i,m}+\lambda} + \alpha\frac{(\sum_{\bx_i\in R_{t^\tright}} G_{i,m})^2}{\sum_{\bx_i\in R_{t^\tright}} H_{i,m}+\lambda} \\
&- \alpha\frac{(\sum_{\bx_i\in R_{t}} G_{i,m})^2}{\sum_{\bx_i\in R_{t}} H_{i,m}+\lambda} \\
=& -p_m(t^\tleft)\sum_{i=1}^N G_{i,m}\1(\bx_i\in R_{t^\tleft}) \\
&- p_m(t^\tright)\sum_{i=1}^N G_{i,m}\1(\bx_i\in R_{t^\tright}) \\
&+ p_m(t)\sum_{i=1}^N G_{i,m}\1(\bx_i\in R_{t}) \\
=& \sum_{i=1}^N -G_{i,m} \cdot \begin{aligned}[t]
& \left[p_m(t^\tleft)\1(\bx_i\in R_{t^\tleft})\right. \\
& +p_m(t^\tright)\1(\bx_i\in R_{t^\tright}) \\
& \left.-p_m(t)\1(\bx_i\in R_{t})\right]\end{aligned}.
\end{aligned}
\end{equation}

Then, by the definition of total gain, 
\begin{equation}
\begin{aligned}
\mathrm{TotalGain}_{m,k} =& \sum_{t\in I_m:v(t)=k}\Delta(t)\\
=& \alpha^{-1} \sum_{m=1}^M\sum_{t\in I_m:v(t)=k} \sum_{i=1}^N -G_{i,m} \\
& \times \begin{aligned}[t]
& \left[p_m(t^\tleft)\1(\bx_i\in R_{t^\tleft})\right. \\
&+ p_m(t^\tright)\1(\bx_i\in R_{t^\tright}) \\
&- \left.p_m(t)\1(\bx_i\in R_{t})\right]\end{aligned}\\
=& -\alpha^{-1}\sum_{i=1}^N f_{m,k}(\bx_i)G_{i,m}.
\end{aligned}
\end{equation}
That completes the proof.
\end{proof}

\subsection{Proof for Remark~\ref{remark:sum-to-zero}}\label{S8.4}

\begin{proof}
By a similar argument as the proof for Proposition~\ref{prop:local-accuracy}, we have
\begin{equation}
\begin{aligned}
& \sum_{\bx \in R_\text{root}} f_m(\bx) \\
=& \sum_{\bx \in R_\text{root}} \sum_{t\in I_m:v(t)=k} \begin{aligned}[t]
& \left[p_m(t^\tleft)\1(\bx\in R_{t^{\tleft}})\right. \\
&+ p_m(t^\tright)\1(\bx\in R_{t^{\tright}}) \\
&- \left.p_m(t)\1(\bx\in R_{t})\right] \\
\end{aligned} \\
=& \sum_{t\in I_m:v(t)=k} \begin{aligned}[t]
& \left(p_m(t^\tleft)|R_{t^{\tleft}}|\right. \\
&+ p_m(t^\tright)|R_{t^{\tright}}| \\
&- \left.p_m(t)|R_{t}|\right) \\
\end{aligned} \\
=& \sum_{t\in I_m:v(t)=k} \begin{aligned}[t]
& \left(\sum_{\bx_i\in R_\tleft} \left(y_i - f_{[m-1]}(\bx_i)\right)\right. \\
&+ \sum_{\bx_i\in R_\tright} \left(y_i - f_{[m-1]}(\bx_i)\right) \\
&- \left.\sum_{\bx_i\in R_t} \left(y_i - f_{[m-1]}(\bx_i)\right)\right) \\
\end{aligned} \\
=& \sum_{t\in I_m:v(t)=k} 0 \\
=& 0
\end{aligned}
\end{equation}
\end{proof}

\section{ADDITIONAL EXPERIMENTAL RESULTS}\label{S9}

In this section, we provide experimental results which do not fit within the main text.

\subsection{AUC Scores}\label{S9.1}

\begin{longtable}{lllllrrrr}
\caption{AUC Scores for Model Trained with Standard Hyperparameters}
\label{table:standard-auc}\\
\toprule
          &            &           &       &          &  AUC\_mean &  AUC\_std &  risk\_mean &  risk\_std \\
Dataset & Task & GFA & Domain & IFA &           &          &            &           \\
\midrule
\endfirsthead
\caption[]{AUC Scores for Model Trained with Standard Hyperparameters} \\
\toprule
          &            &           &       &          &  AUC\_mean &  AUC\_std &  risk\_mean &  risk\_std \\
Dataset & Task & GFA & Domain & IFA &           &          &            &           \\
\midrule
\endhead
\midrule
\multicolumn{9}{r}{{Continued on next page}} \\
\midrule
\endfoot

\bottomrule
\endlastfoot
\multirow{28}{*}{ChIP} & \multirow{14}{*}{Classification} & \multirow{4}{*}{Abs} & \multirow{2}{*}{Train} & PreDecomp &    0.8581 &   0.1200 &     0.4673 &    0.0168 \\
          &            &           &       & TreeSHAP &    0.8651 &   0.1181 &     0.4673 &    0.0168 \\
\cline{4-9}
          &            &           & \multirow{2}{*}{Valid} & PreDecomp &    0.8599 &   0.1172 &     0.4673 &    0.0168 \\
          &            &           &       & TreeSHAP &    0.8643 &   0.1190 &     0.4673 &    0.0168 \\
\cline{3-9}
\cline{4-9}
          &            & \multirow{4}{*}{ForestInner} & \multirow{2}{*}{Train} & PreDecomp &    0.7953 &   0.1207 &     0.4673 &    0.0168 \\
          &            &           &       & TreeSHAP &    0.8640 &   0.1060 &     0.4673 &    0.0168 \\
\cline{4-9}
          &            &           & \multirow{2}{*}{Valid} & PreDecomp &    0.7975 &   0.1619 &     0.4673 &    0.0168 \\
          &            &           &       & TreeSHAP &    0.7553 &   0.1727 &     0.4673 &    0.0168 \\
\cline{3-9}
\cline{4-9}
          &            & \multirow{2}{*}{Permutation} & Train & Permutation &    0.8367 &   0.1091 &     0.4673 &    0.0168 \\
          &            &           & Valid & Permutation &    0.8275 &   0.1245 &     0.4673 &    0.0168 \\
\cline{3-9}
          &            & \multirow{4}{*}{TreeInner} & \multirow{2}{*}{Train} & PreDecomp &    0.8043 &   0.1213 &     0.4673 &    0.0168 \\
          &            &           &       & TreeSHAP &    0.8645 &   0.1081 &     0.4673 &    0.0168 \\
\cline{4-9}
          &            &           & \multirow{2}{*}{Valid} & PreDecomp &    0.8671 &   0.1071 &     0.4673 &    0.0168 \\
          &            &           &       & TreeSHAP &    0.8756 &   0.1021 &     0.4673 &    0.0168 \\
\cline{2-9}
\cline{3-9}
\cline{4-9}
          & \multirow{14}{*}{Regression} & \multirow{4}{*}{Abs} & \multirow{2}{*}{Train} & PreDecomp &    0.9399 &   0.0681 &     0.1504 &    0.0867 \\
          &            &           &       & TreeSHAP &    0.9413 &   0.0688 &     0.1504 &    0.0867 \\
\cline{4-9}
          &            &           & \multirow{2}{*}{Valid} & PreDecomp &    0.9383 &   0.0693 &     0.1504 &    0.0867 \\
          &            &           &       & TreeSHAP &    0.9405 &   0.0696 &     0.1504 &    0.0867 \\
\cline{3-9}
\cline{4-9}
          &            & \multirow{4}{*}{ForestInner} & \multirow{2}{*}{Train} & PreDecomp &    0.9172 &   0.0663 &     0.1504 &    0.0867 \\
          &            &           &       & TreeSHAP &    0.9461 &   0.0651 &     0.1504 &    0.0867 \\
\cline{4-9}
          &            &           & \multirow{2}{*}{Valid} & PreDecomp &    0.9229 &   0.0917 &     0.1504 &    0.0867 \\
          &            &           &       & TreeSHAP &    0.8867 &   0.0837 &     0.1504 &    0.0867 \\
\cline{3-9}
\cline{4-9}
          &            & \multirow{2}{*}{Permutation} & Train & Permutation &    0.9369 &   0.0725 &     0.1504 &    0.0867 \\
          &            &           & Valid & Permutation &    0.9472 &   0.0587 &     0.1504 &    0.0867 \\
\cline{3-9}
          &            & \multirow{4}{*}{TreeInner} & \multirow{2}{*}{Train} & PreDecomp &    0.9235 &   0.0694 &     0.1504 &    0.0867 \\
          &            &           &       & TreeSHAP &    0.9472 &   0.0661 &     0.1504 &    0.0867 \\
\cline{4-9}
          &            &           & \multirow{2}{*}{Valid} & PreDecomp &    0.9604 &   0.0618 &     0.1504 &    0.0867 \\
          &            &           &       & TreeSHAP &    0.9528 &   0.0734 &     0.1504 &    0.0867 \\
\cline{1-9}
\cline{2-9}
\cline{3-9}
\cline{4-9}
\multirow{28}{*}{Simulated} & \multirow{14}{*}{Classification} & \multirow{4}{*}{Abs} & \multirow{2}{*}{Train} & PreDecomp &    0.6564 &   0.1334 &     0.4745 &    0.0191 \\
          &            &           &       & TreeSHAP &    0.6749 &   0.1356 &     0.4745 &    0.0191 \\
\cline{4-9}
          &            &           & \multirow{2}{*}{Valid} & PreDecomp &    0.6500 &   0.1364 &     0.4745 &    0.0191 \\
          &            &           &       & TreeSHAP &    0.6731 &   0.1361 &     0.4745 &    0.0191 \\
\cline{3-9}
\cline{4-9}
          &            & \multirow{4}{*}{ForestInner} & \multirow{2}{*}{Train} & PreDecomp &    0.4876 &   0.1379 &     0.4745 &    0.0191 \\
          &            &           &       & TreeSHAP &    0.6764 &   0.1337 &     0.4745 &    0.0191 \\
\cline{4-9}
          &            &           & \multirow{2}{*}{Valid} & PreDecomp &    0.7722 &   0.1254 &     0.4745 &    0.0191 \\
          &            &           &       & TreeSHAP &    0.7191 &   0.1438 &     0.4745 &    0.0191 \\
\cline{3-9}
\cline{4-9}
          &            & \multirow{2}{*}{Permutation} & Train & Permutation &    0.6712 &   0.1386 &     0.4745 &    0.0191 \\
          &            &           & Valid & Permutation &    0.6599 &   0.1097 &     0.4745 &    0.0191 \\
\cline{3-9}
          &            & \multirow{4}{*}{TreeInner} & \multirow{2}{*}{Train} & PreDecomp &    0.5091 &   0.1429 &     0.4745 &    0.0191 \\
          &            &           &       & TreeSHAP &    0.6798 &   0.1356 &     0.4745 &    0.0191 \\
\cline{4-9}
          &            &           & \multirow{2}{*}{Valid} & PreDecomp &    0.7856 &   0.1277 &     0.4745 &    0.0191 \\
          &            &           &       & TreeSHAP &    0.7520 &   0.1541 &     0.4745 &    0.0191 \\
\cline{2-9}
\cline{3-9}
\cline{4-9}
          & \multirow{14}{*}{Regression} & \multirow{4}{*}{Abs} & \multirow{2}{*}{Train} & PreDecomp &    0.3533 &   0.1853 &     6.8457 &    1.6117 \\
          &            &           &       & TreeSHAP &    0.3776 &   0.1983 &     6.8457 &    1.6117 \\
\cline{4-9}
          &            &           & \multirow{2}{*}{Valid} & PreDecomp &    0.3513 &   0.1849 &     6.8457 &    1.6117 \\
          &            &           &       & TreeSHAP &    0.3773 &   0.1988 &     6.8457 &    1.6117 \\
\cline{3-9}
\cline{4-9}
          &            & \multirow{4}{*}{ForestInner} & \multirow{2}{*}{Train} & PreDecomp &    0.2147 &   0.1183 &     6.8457 &    1.6117 \\
          &            &           &       & TreeSHAP &    0.3651 &   0.1993 &     6.8457 &    1.6117 \\
\cline{4-9}
          &            &           & \multirow{2}{*}{Valid} & PreDecomp &    0.6380 &   0.0976 &     6.8457 &    1.6117 \\
          &            &           &       & TreeSHAP &    0.6429 &   0.0965 &     6.8457 &    1.6117 \\
\cline{3-9}
\cline{4-9}
          &            & \multirow{2}{*}{Permutation} & Train & Permutation &    0.3769 &   0.2002 &     6.8457 &    1.6117 \\
          &            &           & Valid & Permutation &    0.5493 &   0.1392 &     6.8457 &    1.6117 \\
\cline{3-9}
          &            & \multirow{4}{*}{TreeInner} & \multirow{2}{*}{Train} & PreDecomp &    0.2278 &   0.1272 &     6.8457 &    1.6117 \\
          &            &           &       & TreeSHAP &    0.3673 &   0.1989 &     6.8457 &    1.6117 \\
\cline{4-9}
          &            &           & \multirow{2}{*}{Valid} & PreDecomp &    0.6384 &   0.0953 &     6.8457 &    1.6117 \\
          &            &           &       & TreeSHAP &    0.6496 &   0.1015 &     6.8457 &    1.6117 \\
\end{longtable}

\subsection{Sweeping over \texttt{eta}}

\begin{figure}[ht]
\vspace{.3in}
\includegraphics[width=\textwidth]{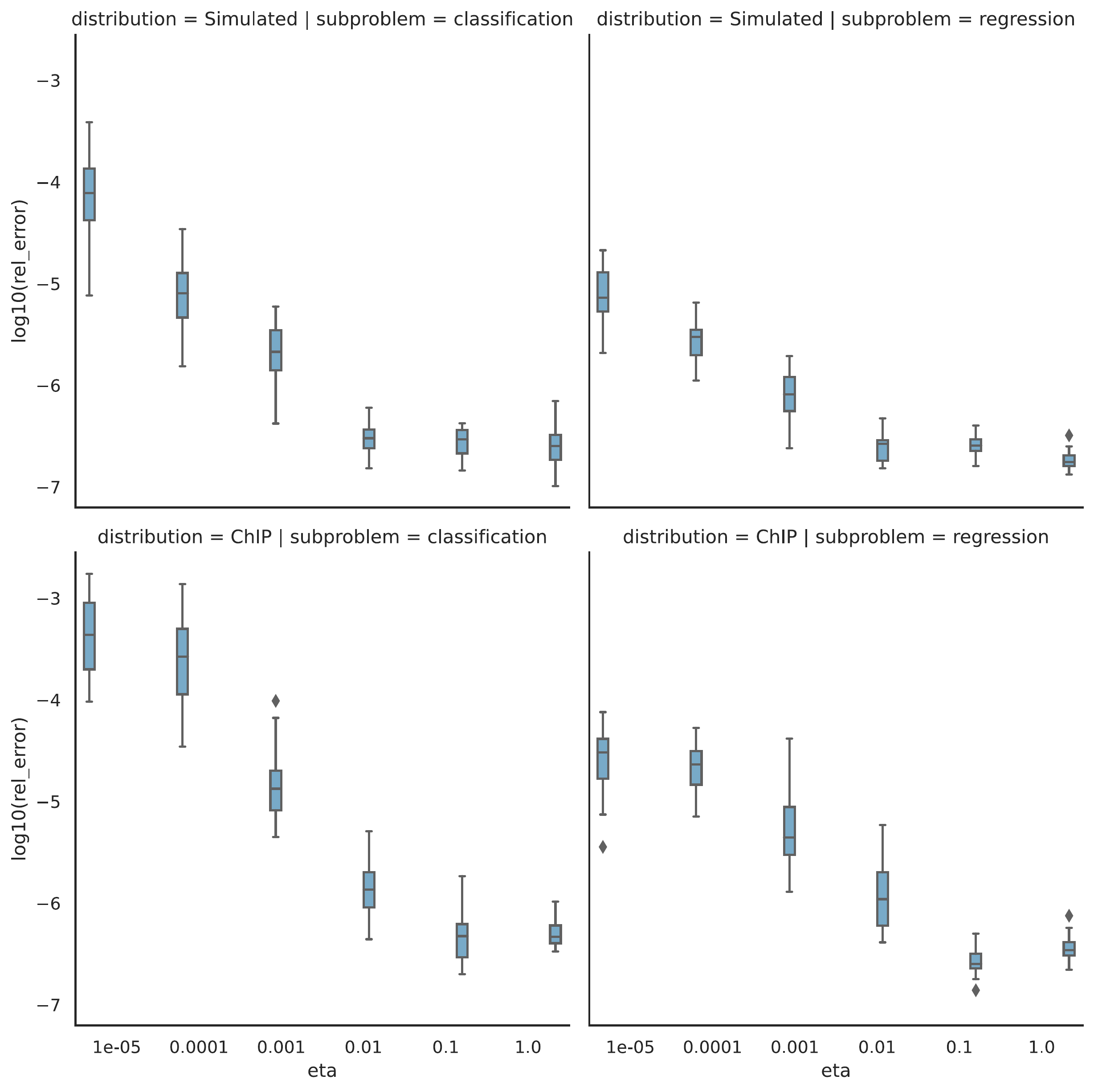}
\vspace{.3in}
\caption{Logarithm of the maximum absolute difference between the normalized total gain calculated with our methodology and the built-in method.}
\label{fig:error-eta}
\end{figure}

\begin{figure}[ht]
\vspace{.3in}
\includegraphics[width=\textwidth]{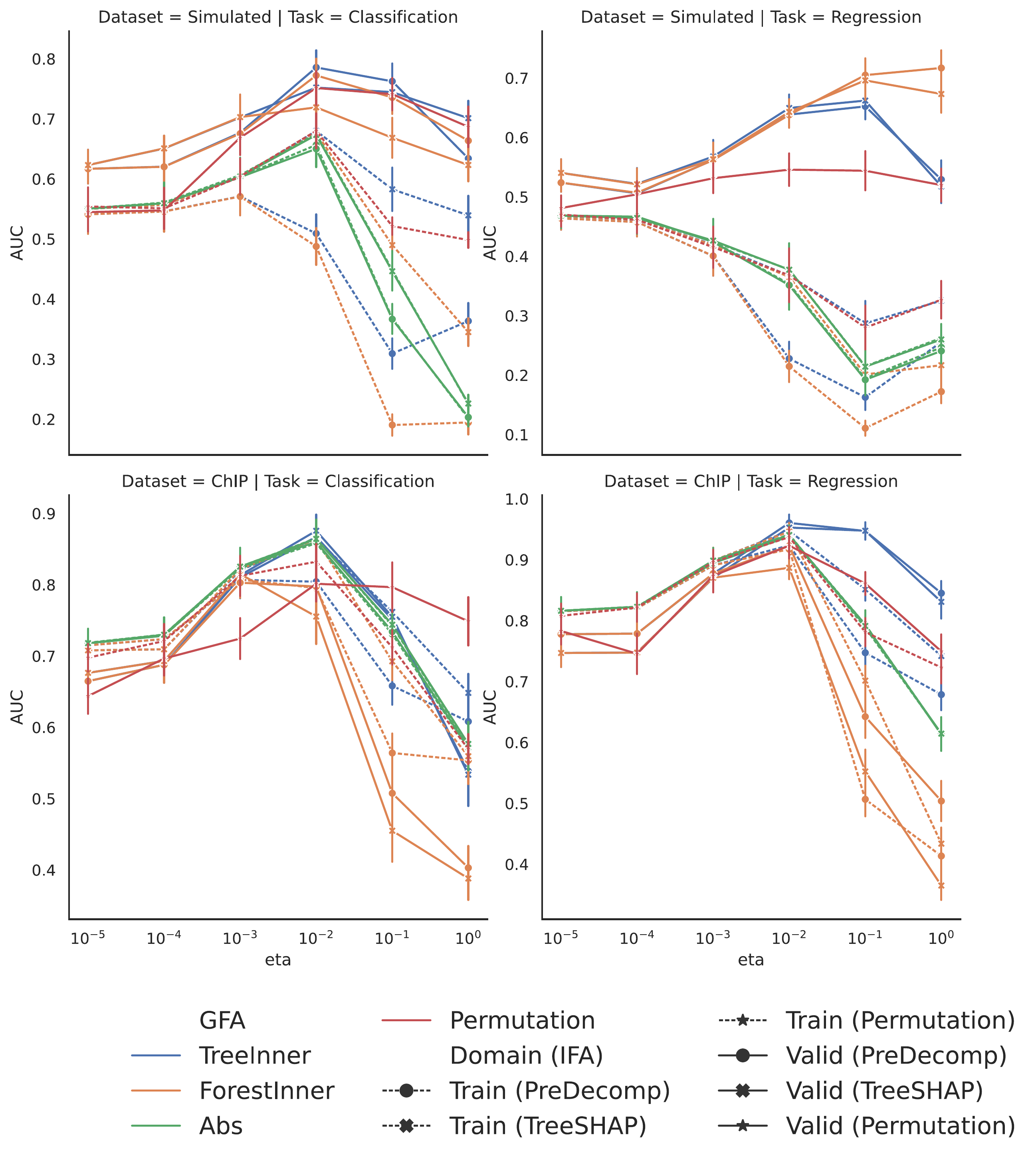}
\vspace{.3in}
\caption{
AUC score for noisy feature identification, averaged across 20 replications.
The error bars correspond to one standard error.
Higher is better.
}
\label{fig:auc-eta}
\end{figure}

\begin{figure}[ht]
\vspace{.3in}
\includegraphics[width=\textwidth]{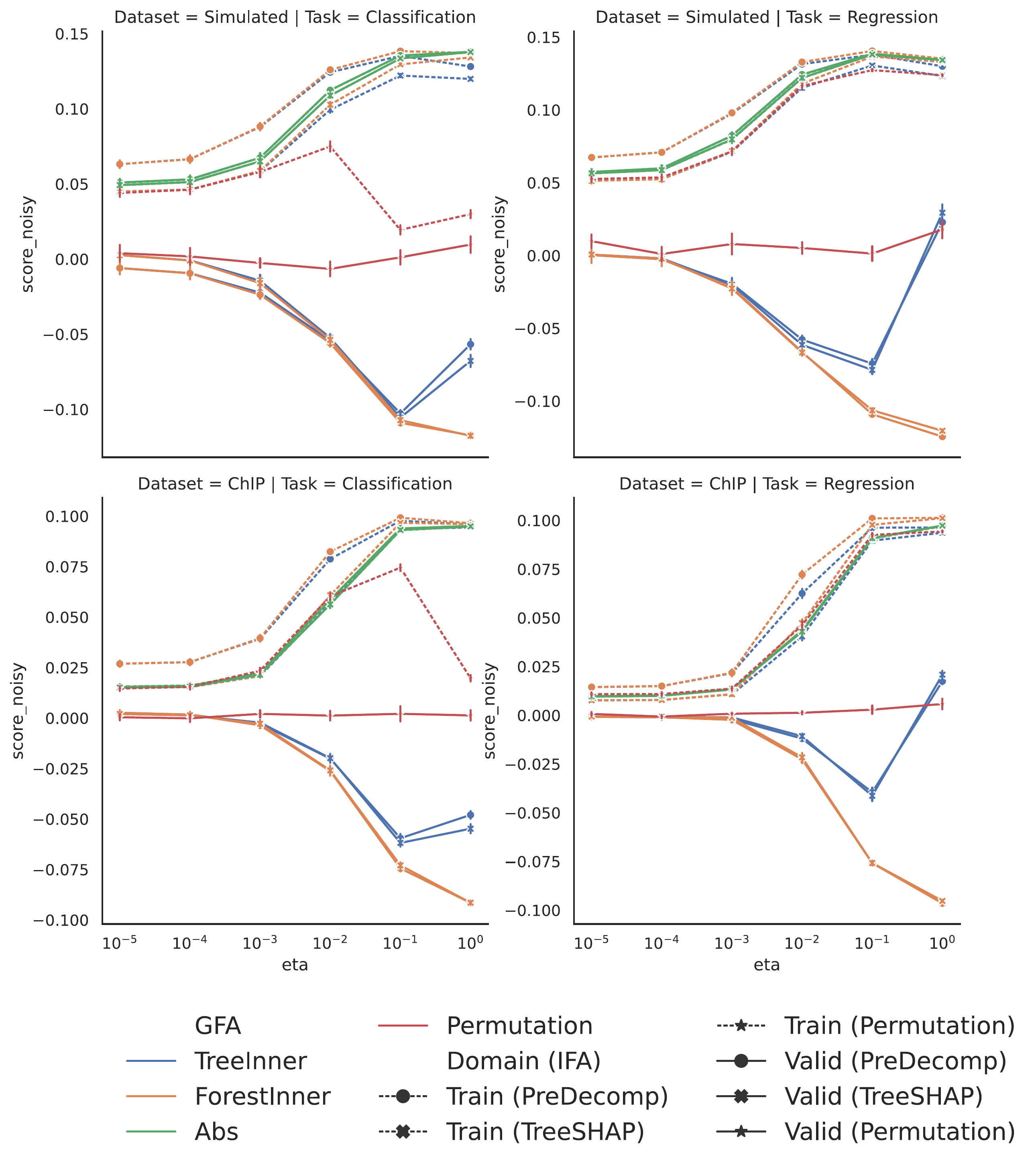}
\vspace{.3in}
\caption{
The average normalized score for the noisy features, averaged across 20 replications.
The error bars correspond to one standard error.
Lower is better.
}
\label{fig:score-noisy-eta}
\end{figure}

\begin{figure}[ht]
\vspace{.3in}
\includegraphics[width=\textwidth]{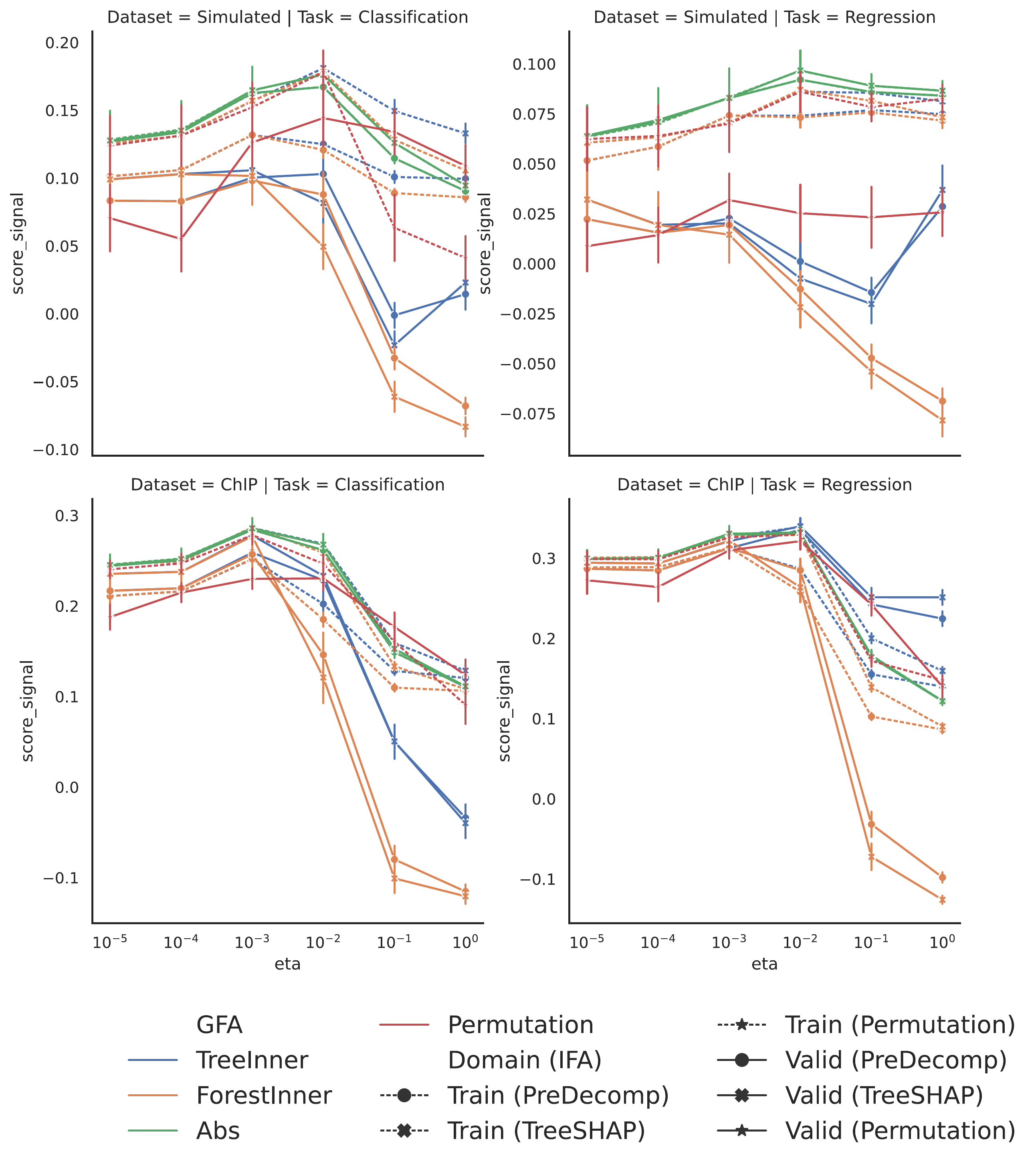}
\vspace{.3in}
\caption{
The average normalized score for the relevant features, averaged across 20 replications.
The error bars correspond to one standard error.
Lower is better.
}
\label{fig:score-signal-eta}
\end{figure}

\begin{figure}[ht]
\vspace{.3in}
\includegraphics[width=\textwidth]{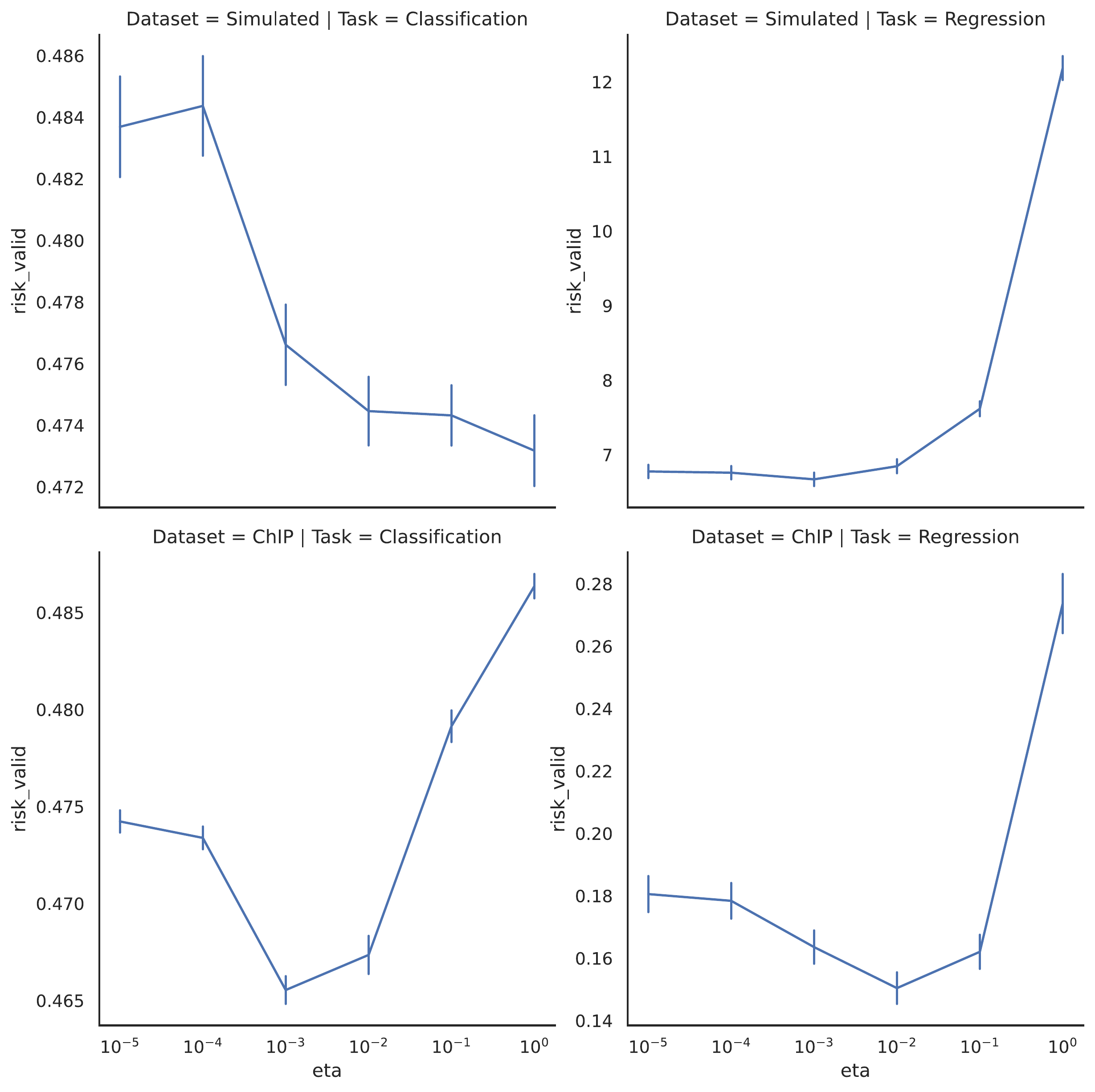}
\vspace{.3in}
\caption{
Risk evaluated on the validation set.
The error bars correspond to one standard error.
Lower is better.
}
\label{fig:risk-eta}
\end{figure}

\subsection{Sweeping over \texttt{max\_depth}}

\begin{figure}[ht]
\vspace{.3in}
\includegraphics[width=\textwidth]{figures/max_depth/max_depth-error-04_aggregate_2019_svg-tex.pdf}
\vspace{.3in}
\caption{Logarithm of the maximum absolute difference between the normalized total gain calculated with our methodology and the built-in method.}
\label{fig:error-max_depth}
\end{figure}

\begin{figure}[ht]
\vspace{.3in}
\includegraphics[width=\textwidth]{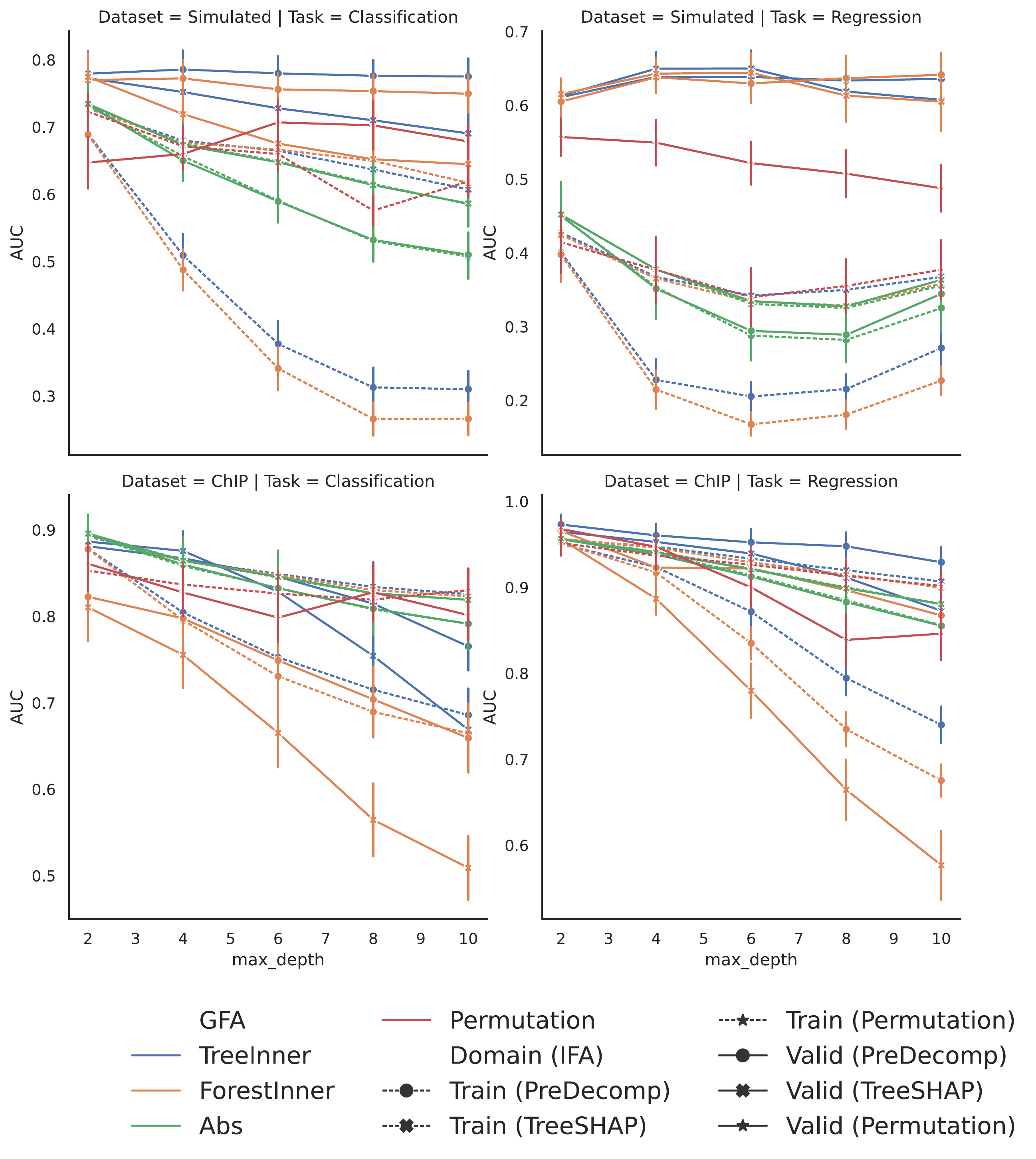}
\vspace{.3in}
\caption{
AUC score for noisy feature identification, averaged across 20 replications.
The error bars correspond to one standard error.
Higher is better.
}
\label{fig:auc-max_depth}
\end{figure}

\begin{figure}[ht]
\vspace{.3in}
\includegraphics[width=\textwidth]{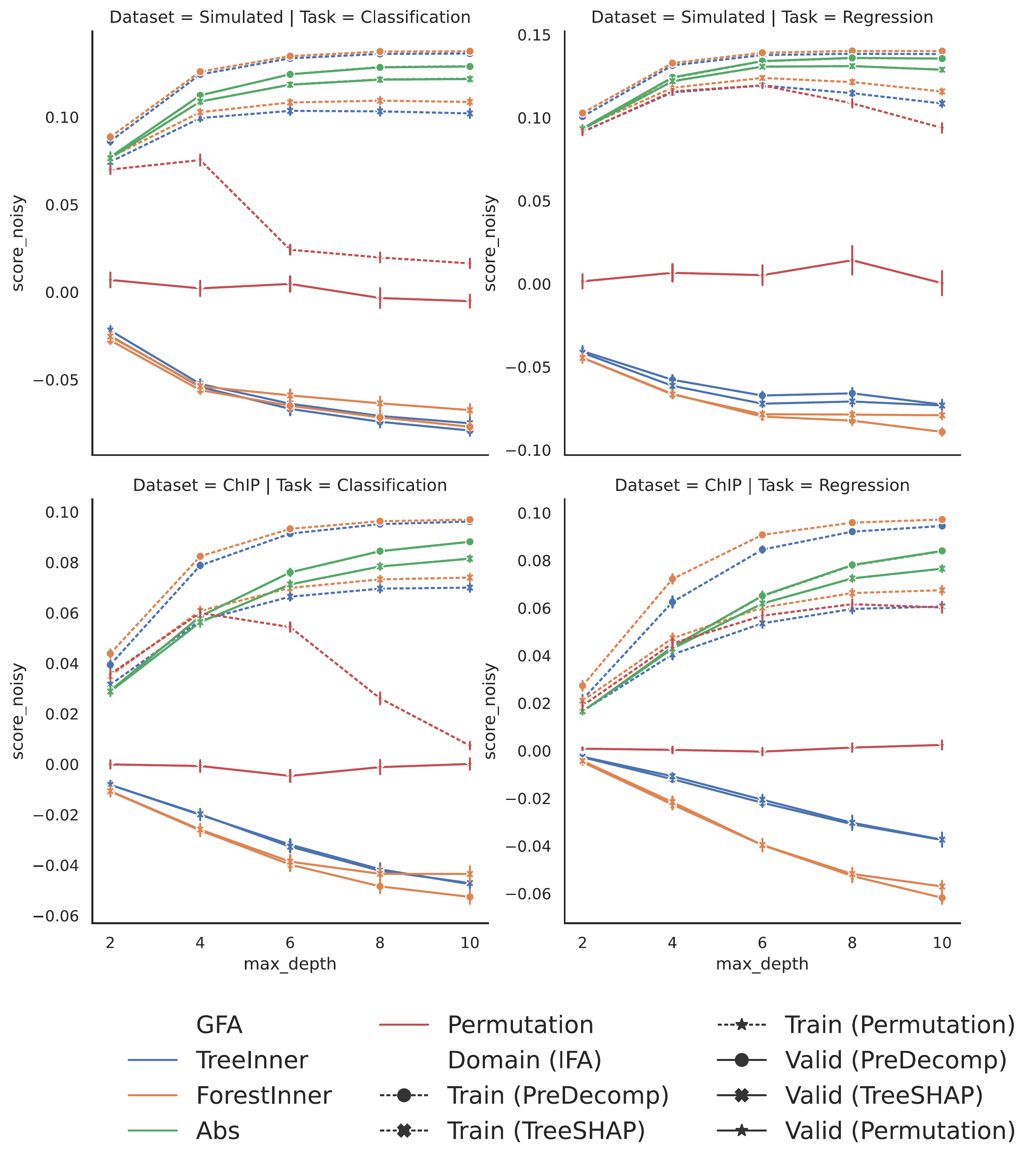}
\vspace{.3in}
\caption{
The average normalized score for the noisy features, averaged across 20 replications.
The error bars correspond to one standard error.
Lower is better.
}
\label{fig:score-noisy-max_depth}
\end{figure}

\begin{figure}[ht]
\vspace{.3in}
\includegraphics[width=\textwidth]{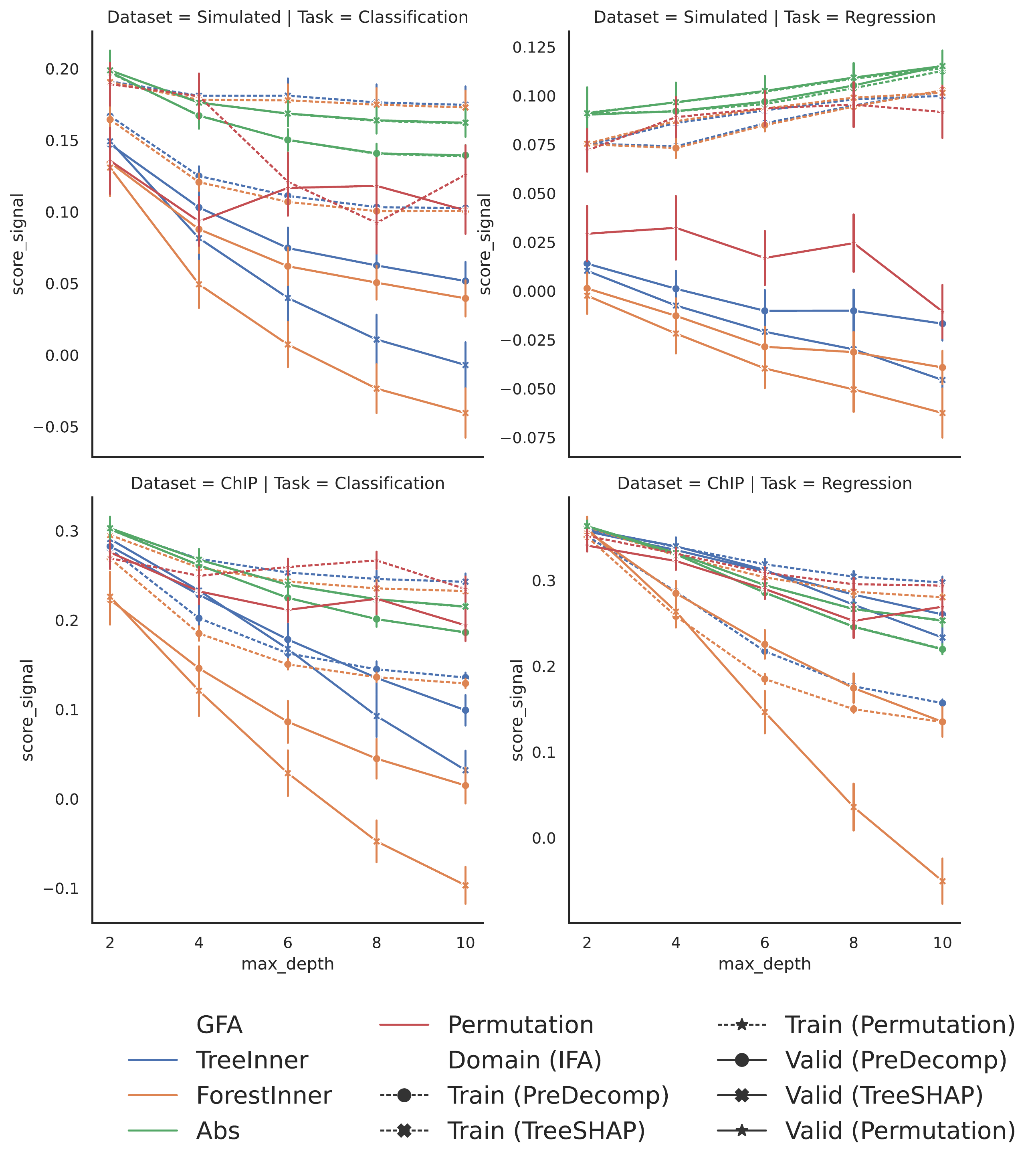}
\vspace{.3in}
\caption{
The average normalized score for the relevant features, averaged across 20 replications.
The error bars correspond to one standard error.
Lower is better.
}
\label{fig:score-signal-max_depth}
\end{figure}

\begin{figure}[ht]
\vspace{.3in}
\includegraphics[width=\textwidth]{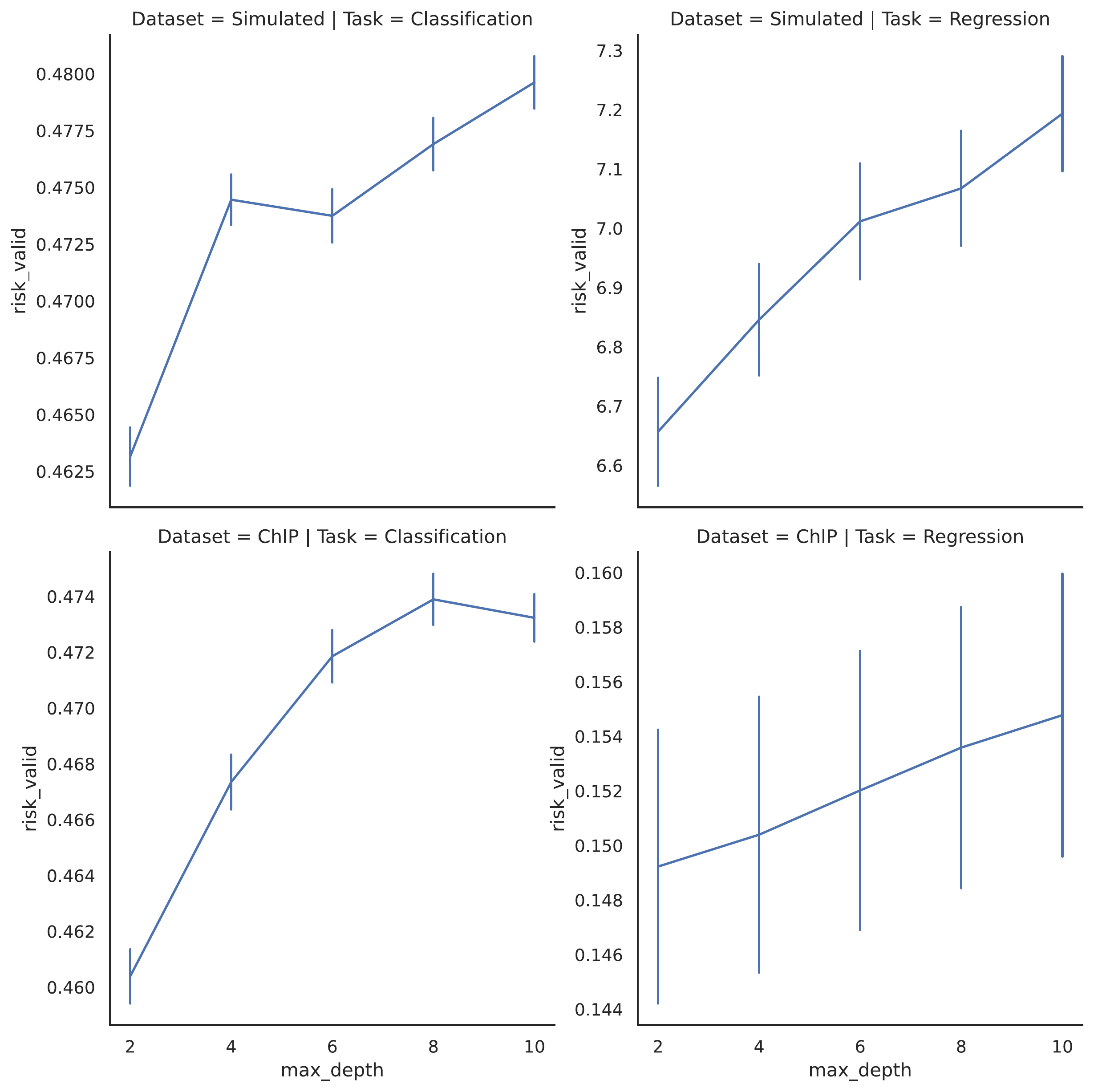}
\vspace{.3in}
\caption{
Risk evaluated on the validation set.
The error bars correspond to one standard error.
Lower is better.
}
\label{fig:risk-max_depth}
\end{figure}

\subsection{Sweeping over \texttt{min\_child\_weight}}

\begin{figure}[ht]
\vspace{.3in}
\includegraphics[width=\textwidth]{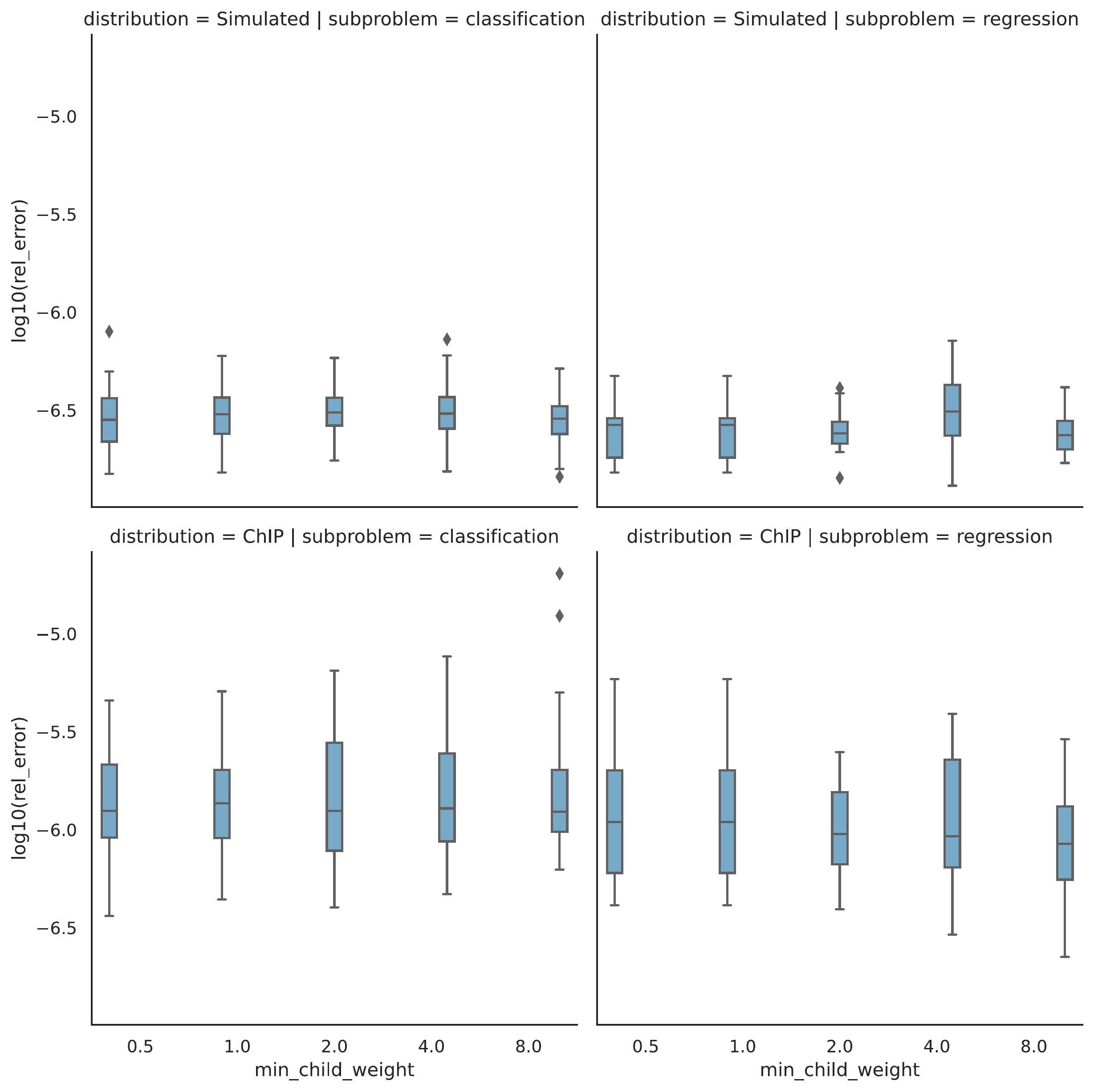}
\vspace{.3in}
\caption{Logarithm of the maximum absolute difference between the normalized total gain calculated with our methodology and the built-in method.}
\label{fig:error-min_child_weight}
\end{figure}

\begin{figure}[ht]
\vspace{.3in}
\includegraphics[width=\textwidth]{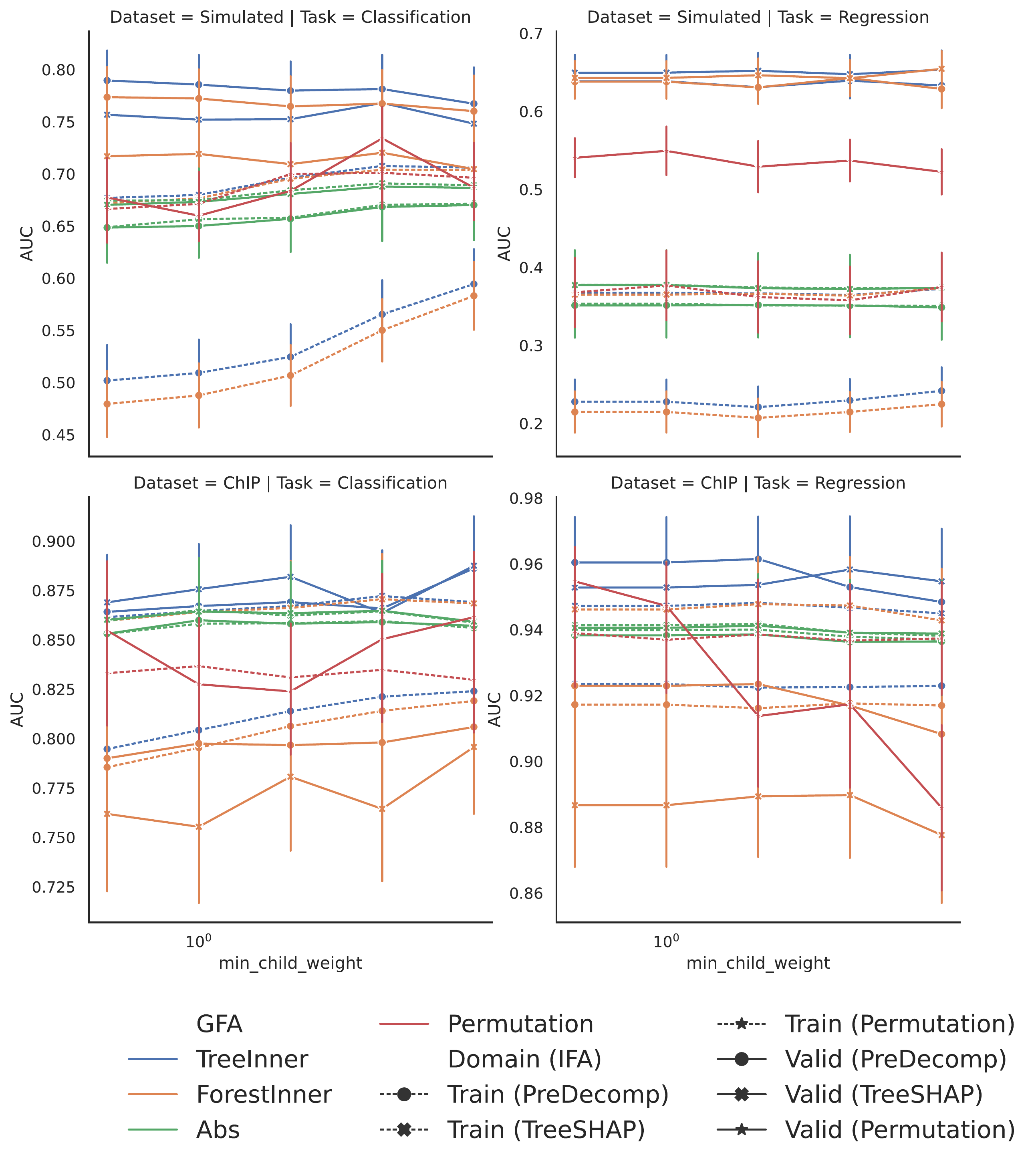}
\vspace{.3in}
\caption{
AUC score for noisy feature identification, averaged across 20 replications.
The error bars correspond to one standard error.
Higher is better.
}
\label{fig:auc-min_child_weight}
\end{figure}

\begin{figure}[ht]
\vspace{.3in}
\includegraphics[width=\textwidth]{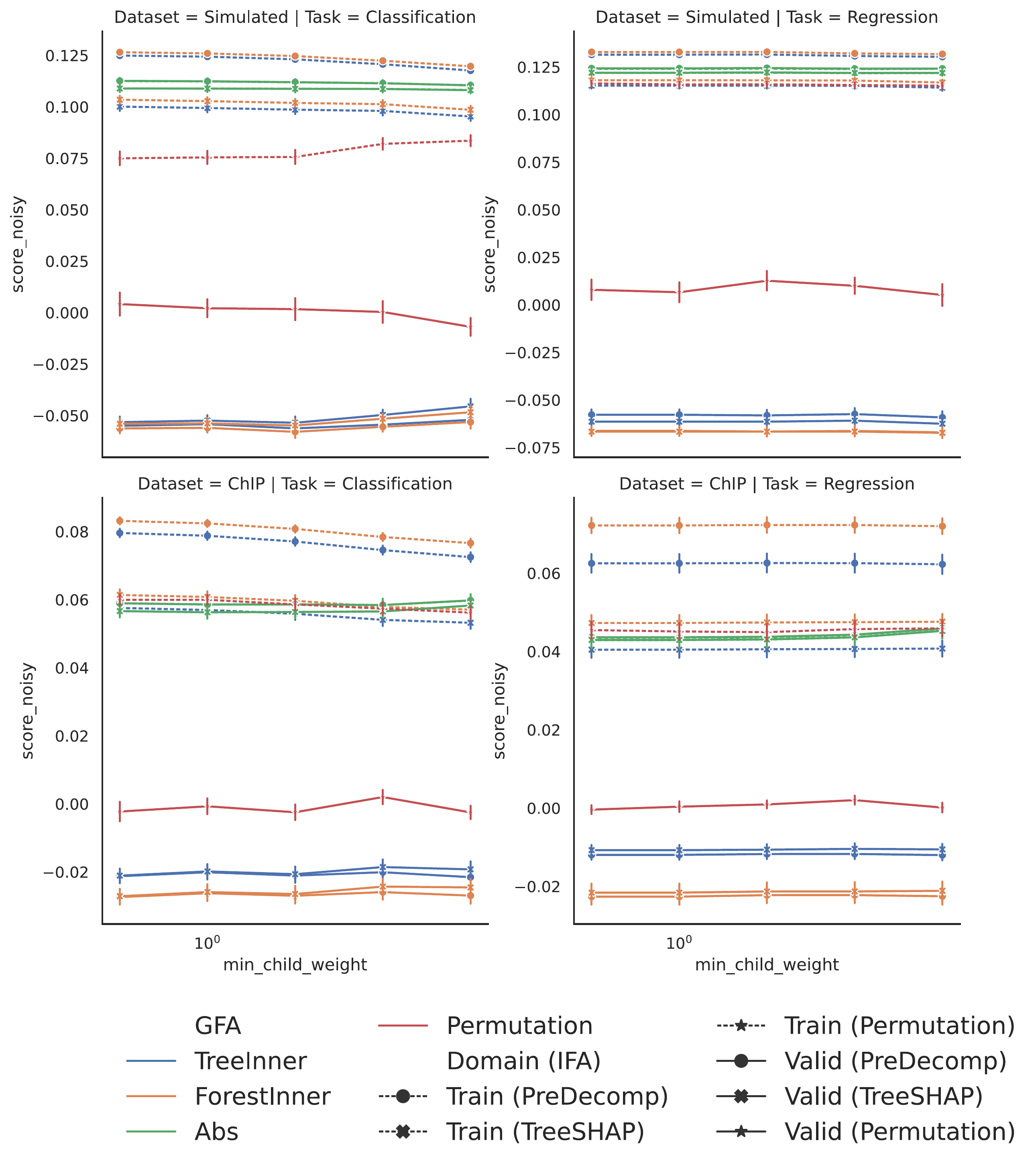}
\vspace{.3in}
\caption{
The average normalized score for the noisy features, averaged across 20 replications.
The error bars correspond to one standard error.
Lower is better.
}
\label{fig:score-noisy-min_child_weight}
\end{figure}

\begin{figure}[ht]
\vspace{.3in}
\includegraphics[width=\textwidth]{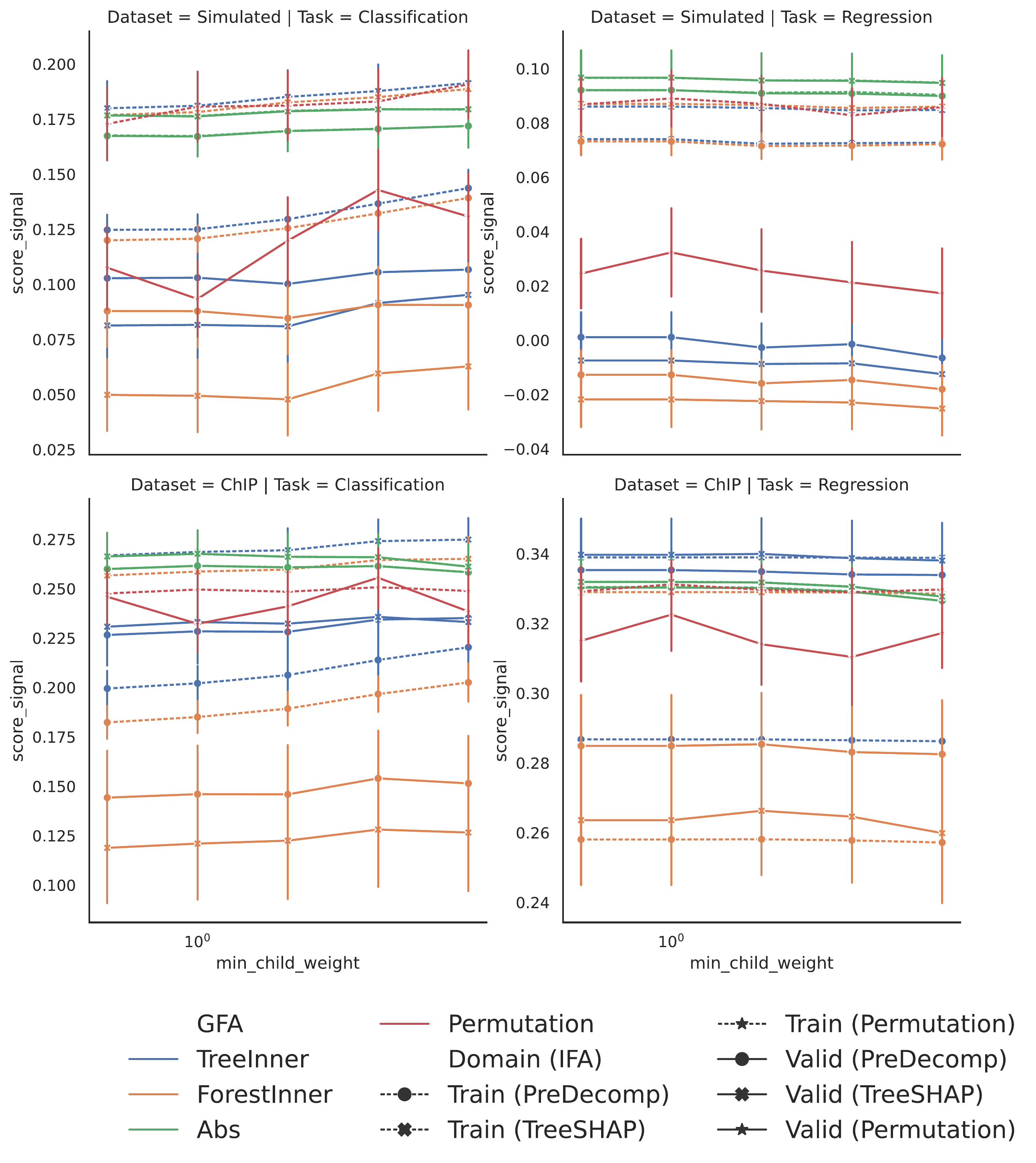}
\vspace{.3in}
\caption{
The average normalized score for the relevant features, averaged across 20 replications.
The error bars correspond to one standard error.
Lower is better.
}
\label{fig:score-signal-min_child_weight}
\end{figure}

\begin{figure}[ht]
\vspace{.3in}
\includegraphics[width=\textwidth]{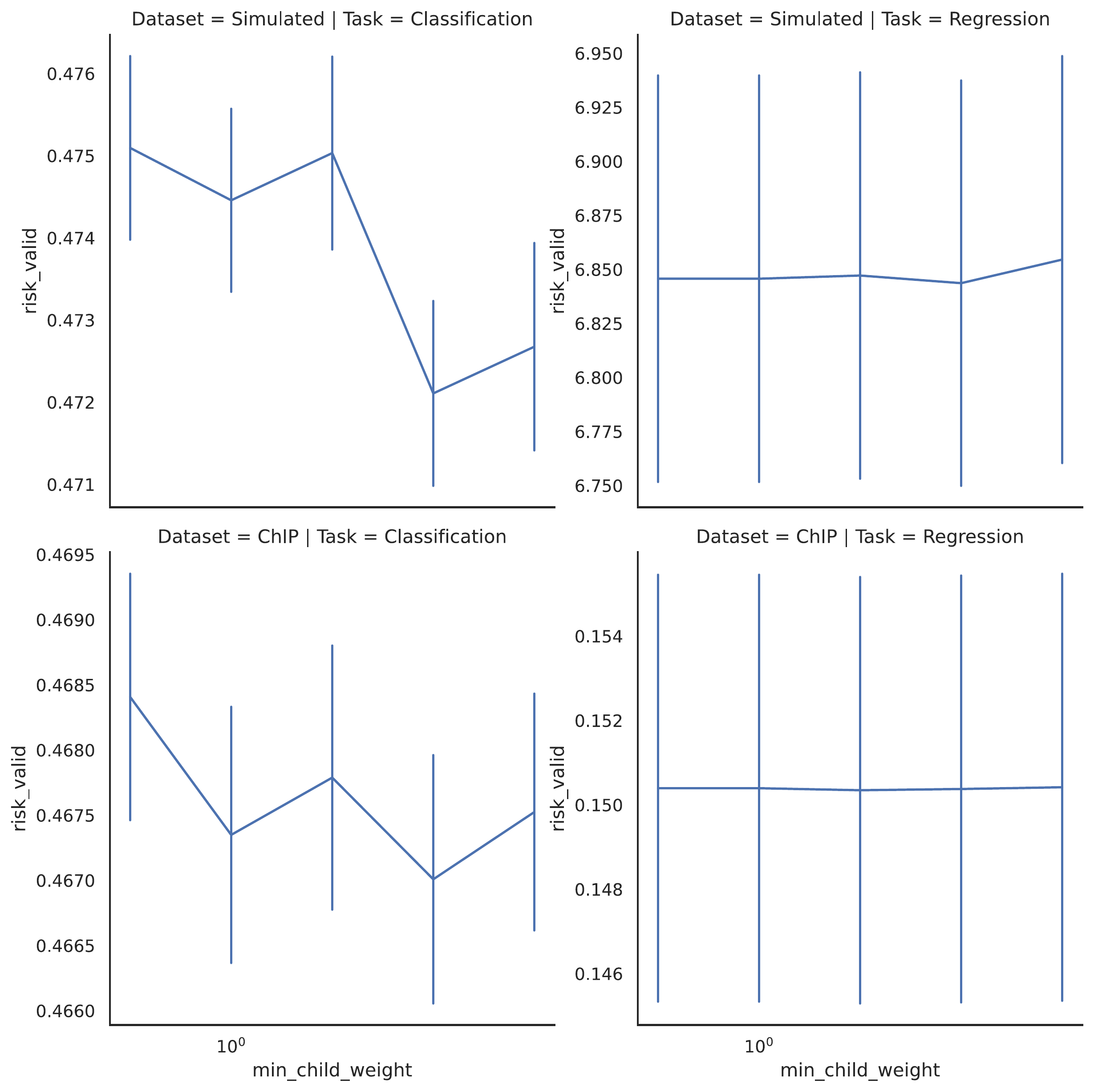}
\vspace{.3in}
\caption{
Risk evaluated on the validation set.
The error bars correspond to one standard error.
Lower is better.
}
\label{fig:risk-min_child_weight}
\end{figure}

\subsection{Sweeping over \texttt{num\_boost\_round}}

\begin{figure}[ht]
\vspace{.3in}
\includegraphics[width=\textwidth]{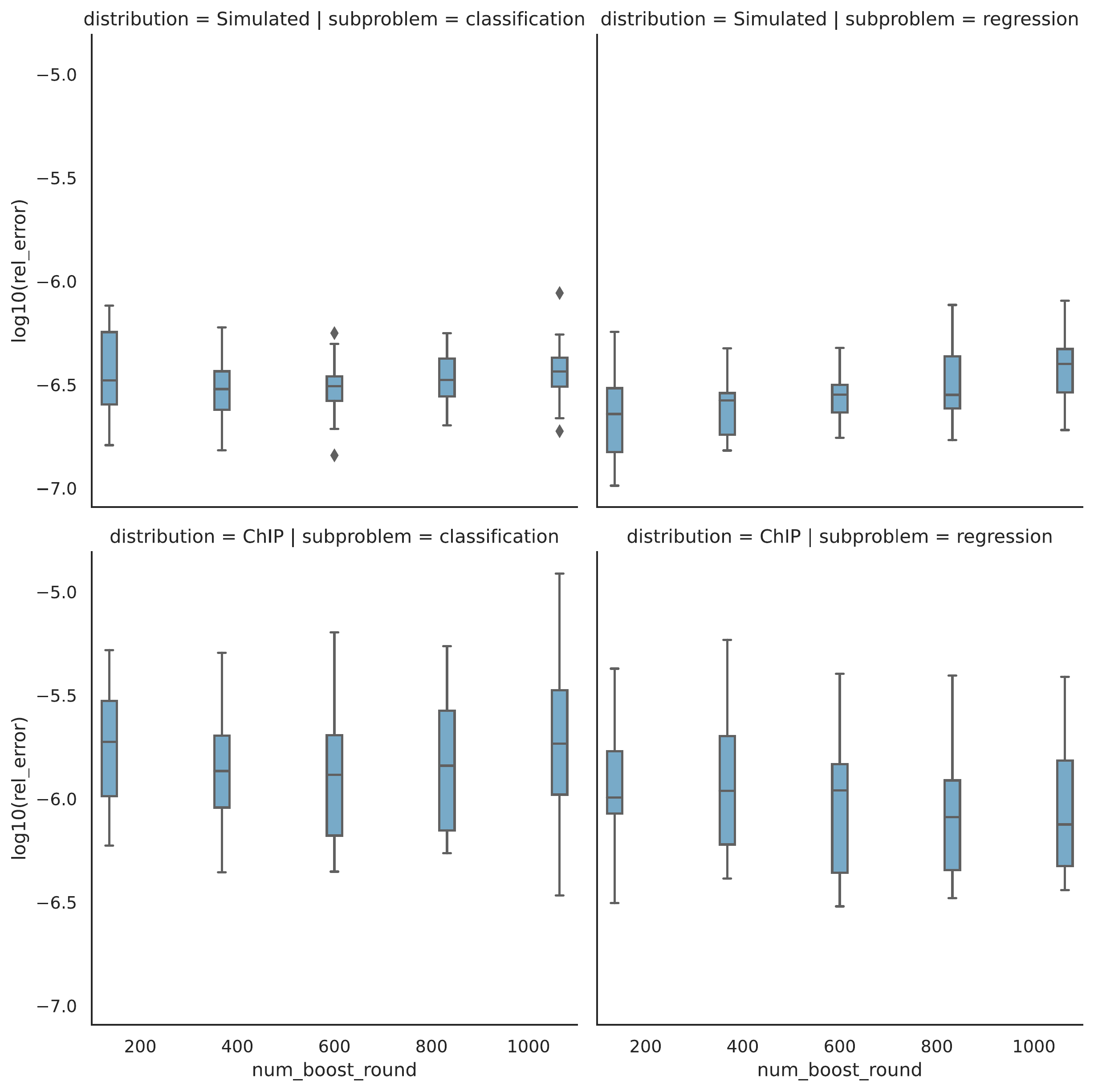}
\vspace{.3in}
\caption{Logarithm of the maximum absolute difference between the normalized total gain calculated with our methodology and the built-in method.}
\label{fig:error-num_boost_round}
\end{figure}

\begin{figure}[ht]
\vspace{.3in}
\includegraphics[width=\textwidth]{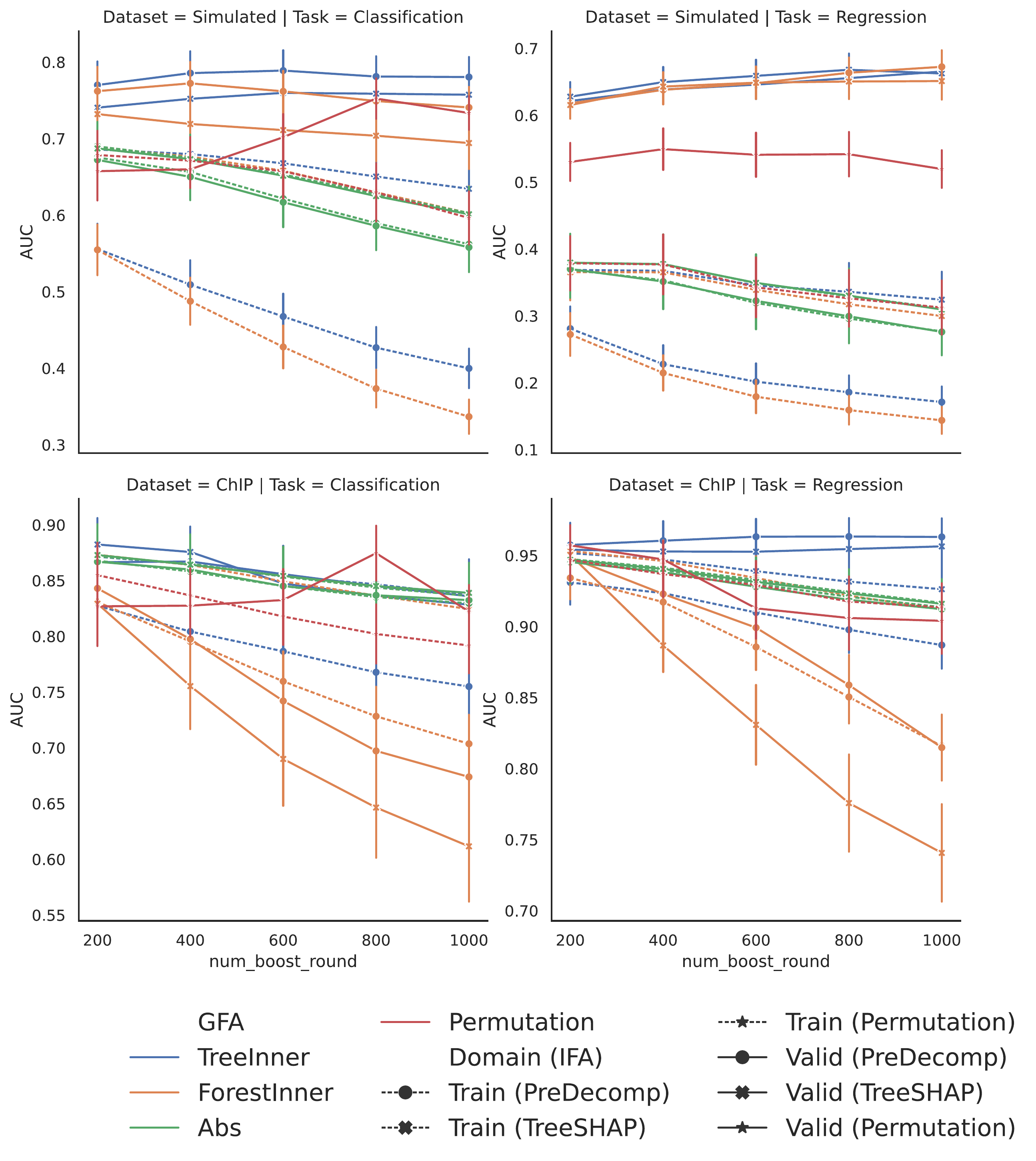}
\vspace{.3in}
\caption{
AUC score for noisy feature identification, averaged across 20 replications.
The error bars correspond to one standard error.
Higher is better.
}
\label{fig:auc-num_boost_round}
\end{figure}

\begin{figure}[ht]
\vspace{.3in}
\includegraphics[width=\textwidth]{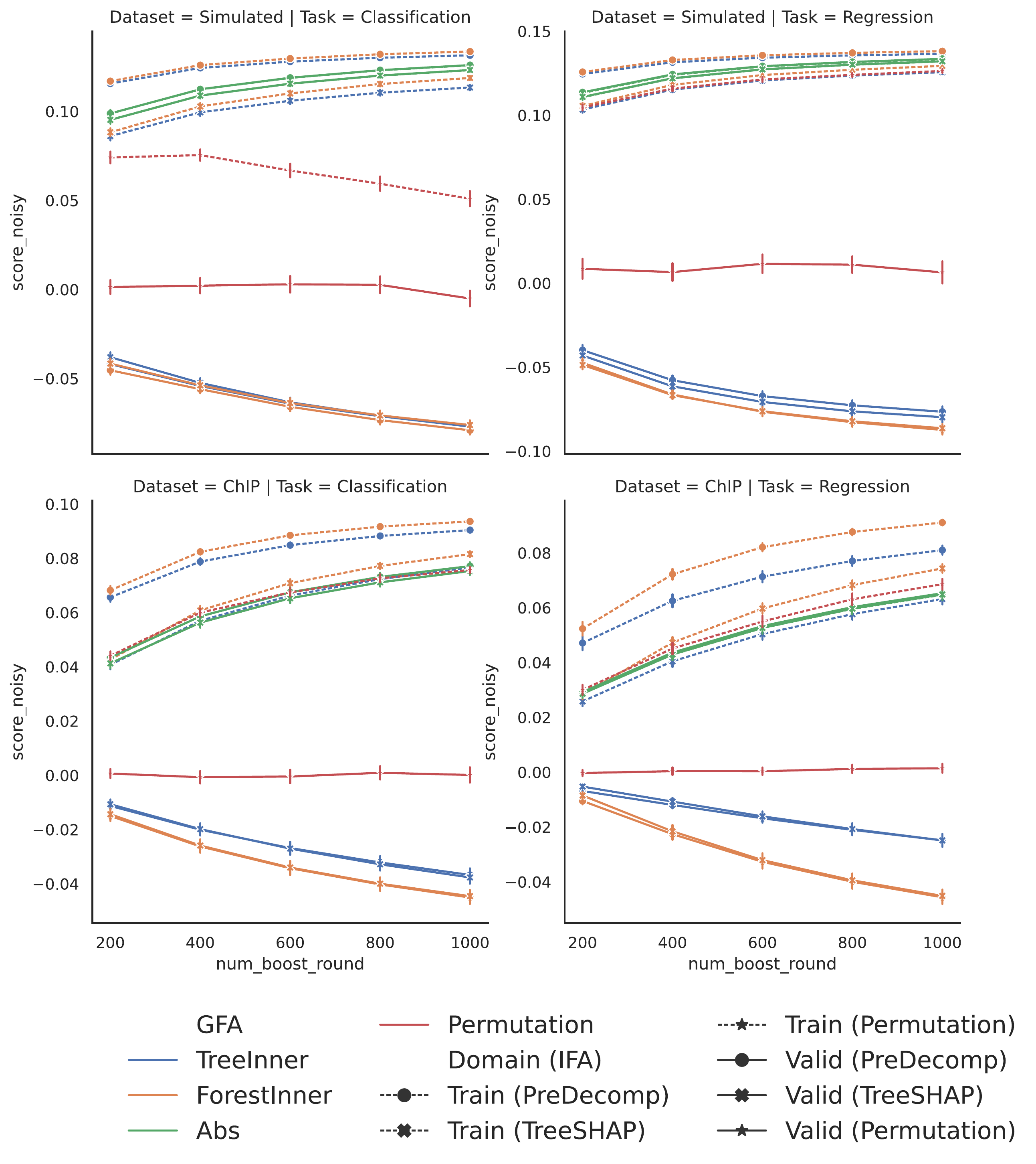}
\vspace{.3in}
\caption{
The average normalized score for the noisy features, averaged across 20 replications.
The error bars correspond to one standard error.
Lower is better.
}
\label{fig:score-noisy-num_boost_round}
\end{figure}

\begin{figure}[ht]
\vspace{.3in}
\includegraphics[width=\textwidth]{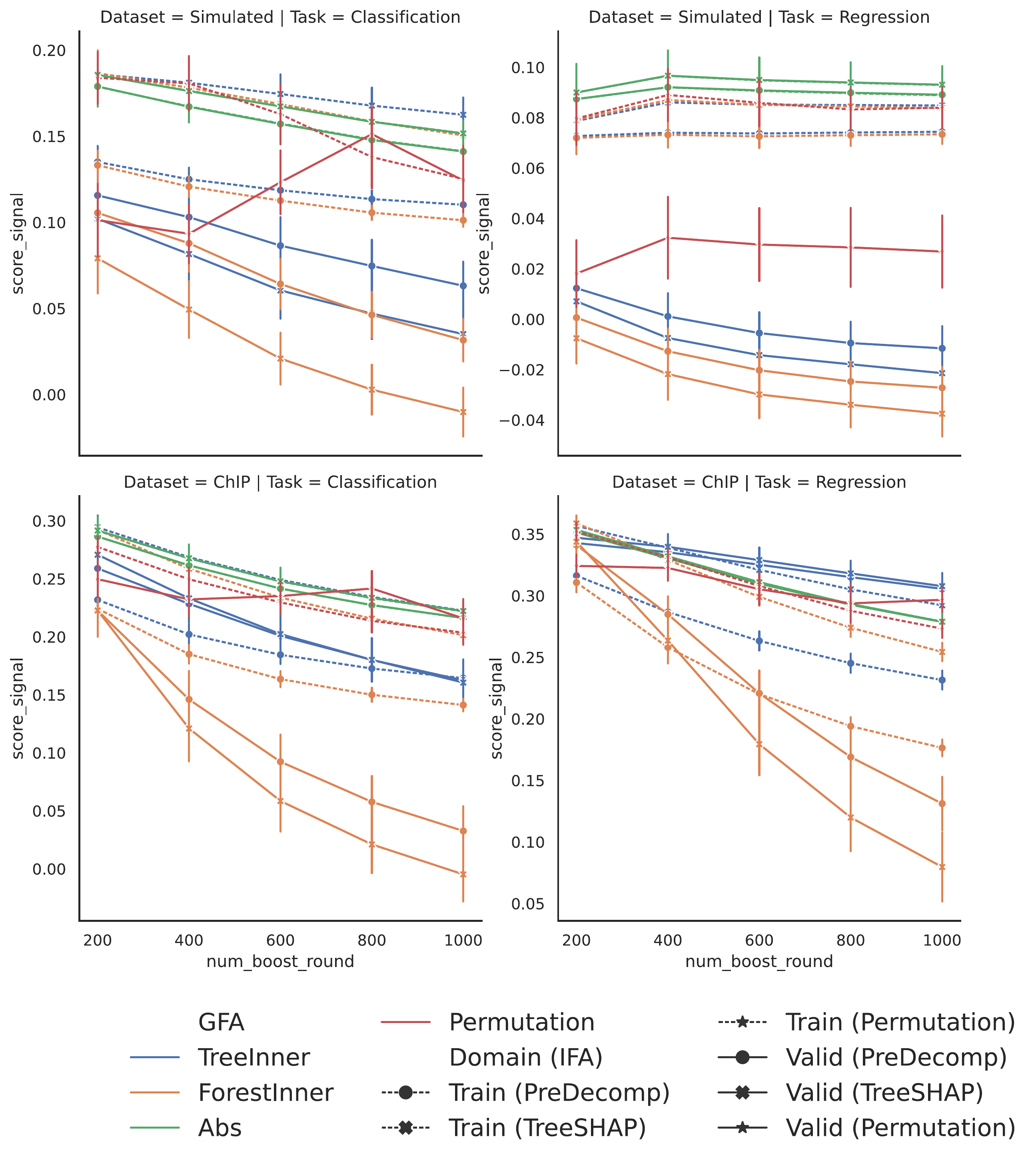}
\vspace{.3in}
\caption{
The average normalized score for the relevant features, averaged across 20 replications.
The error bars correspond to one standard error.
Lower is better.
}
\label{fig:score-signal-num_boost_round}
\end{figure}

\begin{figure}[ht]
\vspace{.3in}
\includegraphics[width=\textwidth]{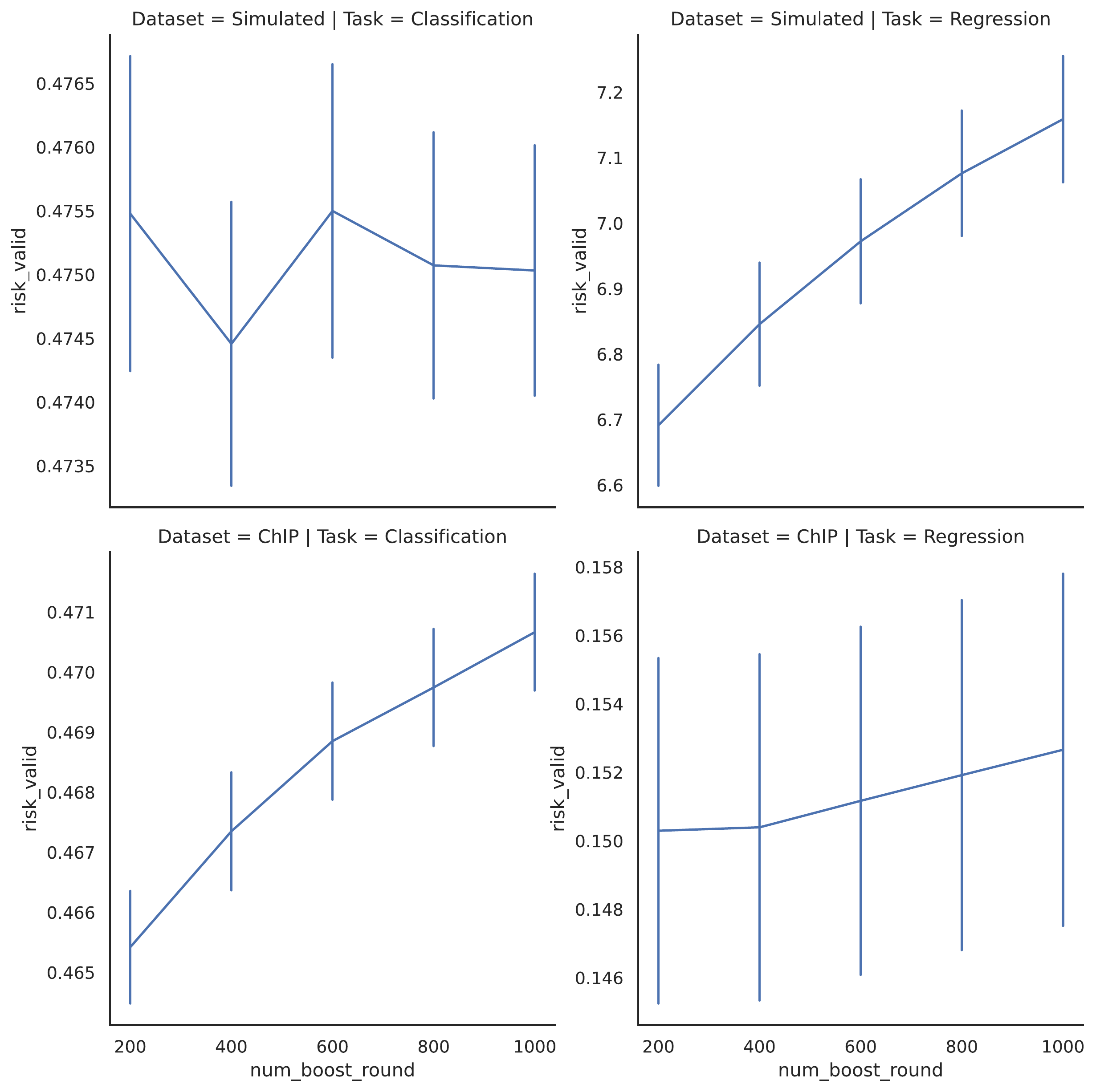}
\vspace{.3in}
\caption{
Risk evaluated on the validation set.
The error bars correspond to one standard error.
Lower is better.
}
\label{fig:risk-num_boost_round}
\end{figure}

\subsection{Sweeping over \texttt{reg\_lambda}}

\begin{figure}[ht]
\vspace{.3in}
\includegraphics[width=\textwidth]{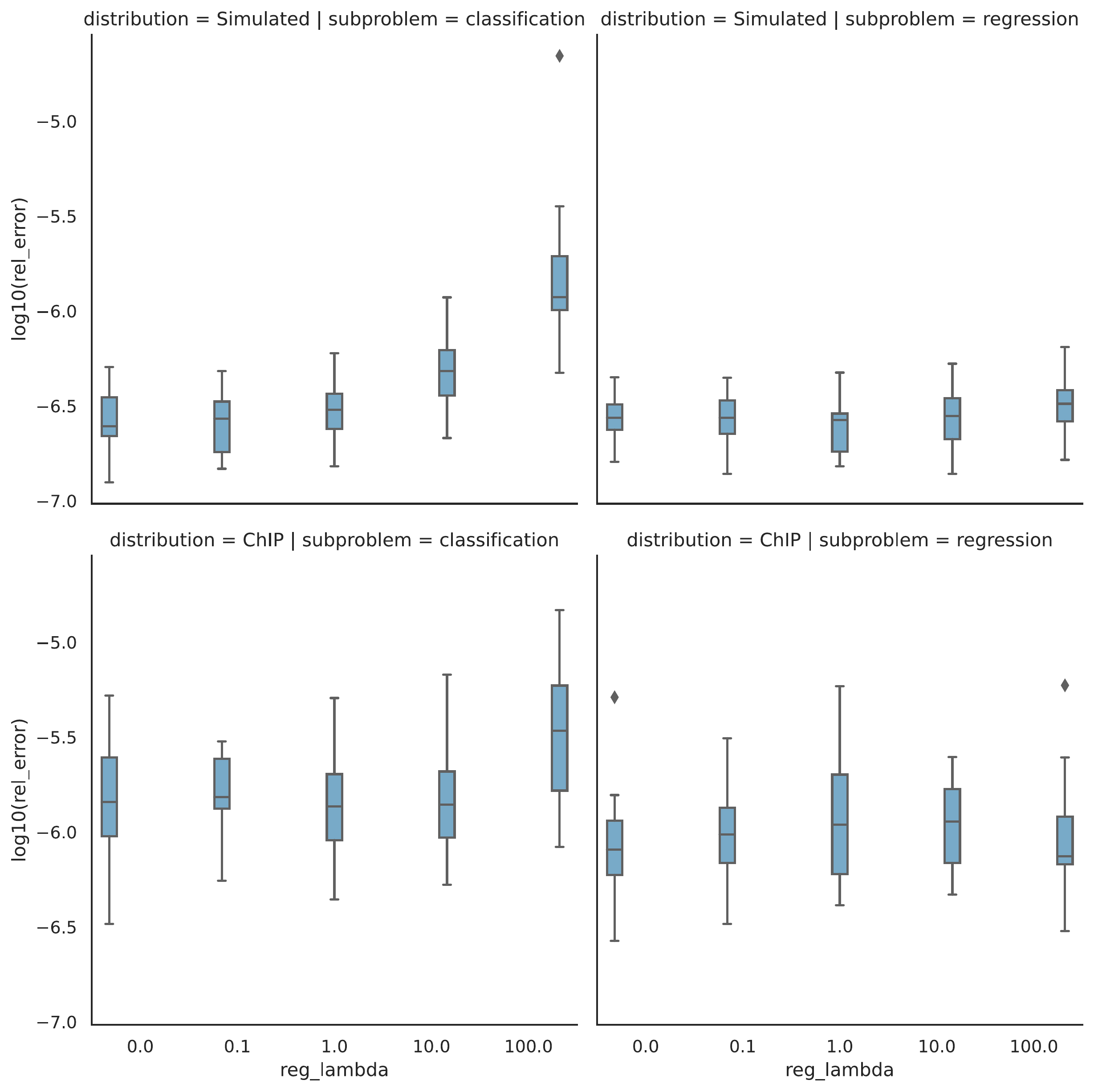}
\vspace{.3in}
\caption{Logarithm of the maximum absolute difference between the normalized total gain calculated with our methodology and the built-in method.}
\label{fig:error-reg_lambda}
\end{figure}

\begin{figure}[ht]
\vspace{.3in}
\includegraphics[width=\textwidth]{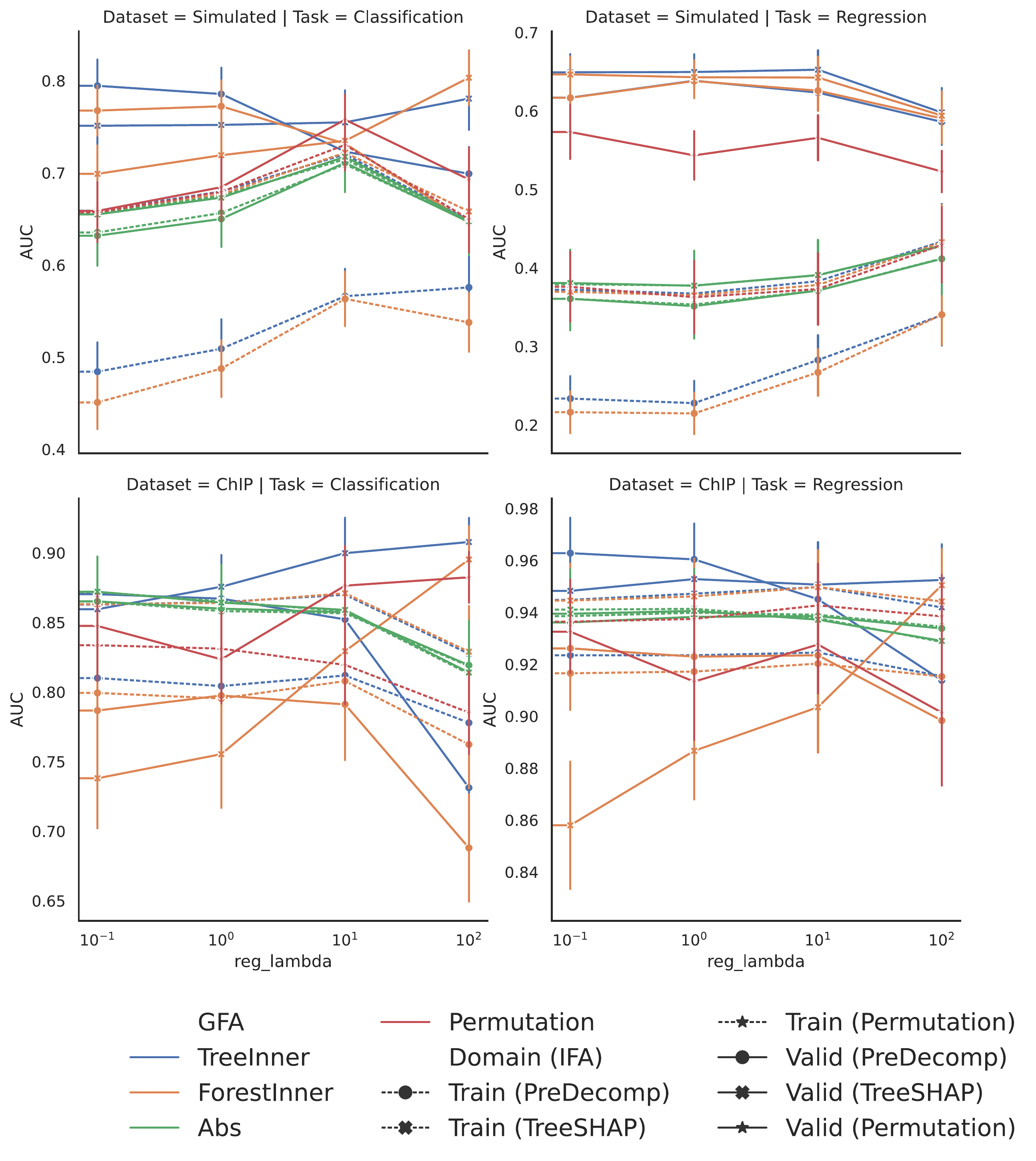}
\vspace{.3in}
\caption{
AUC score for noisy feature identification, averaged across 20 replications.
The error bars correspond to one standard error.
Higher is better.
}
\label{fig:auc-reg_lambda}
\end{figure}

\begin{figure}[ht]
\vspace{.3in}
\includegraphics[width=\textwidth]{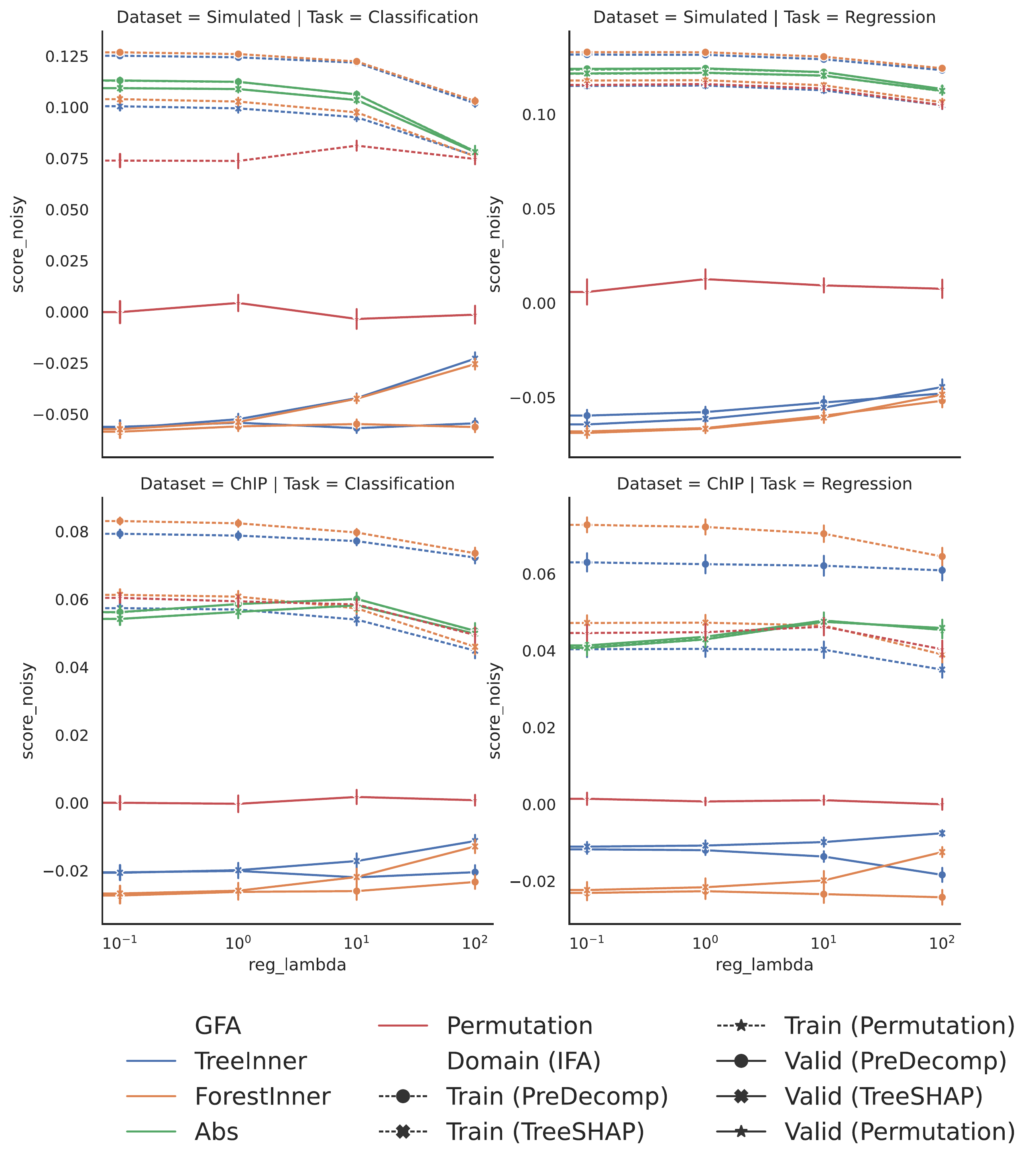}
\vspace{.3in}
\caption{
The average normalized score for the noisy features, averaged across 20 replications.
The error bars correspond to one standard error.
Lower is better.
}
\label{fig:score-noisy-reg_lambda}
\end{figure}

\begin{figure}[ht]
\vspace{.3in}
\includegraphics[width=\textwidth]{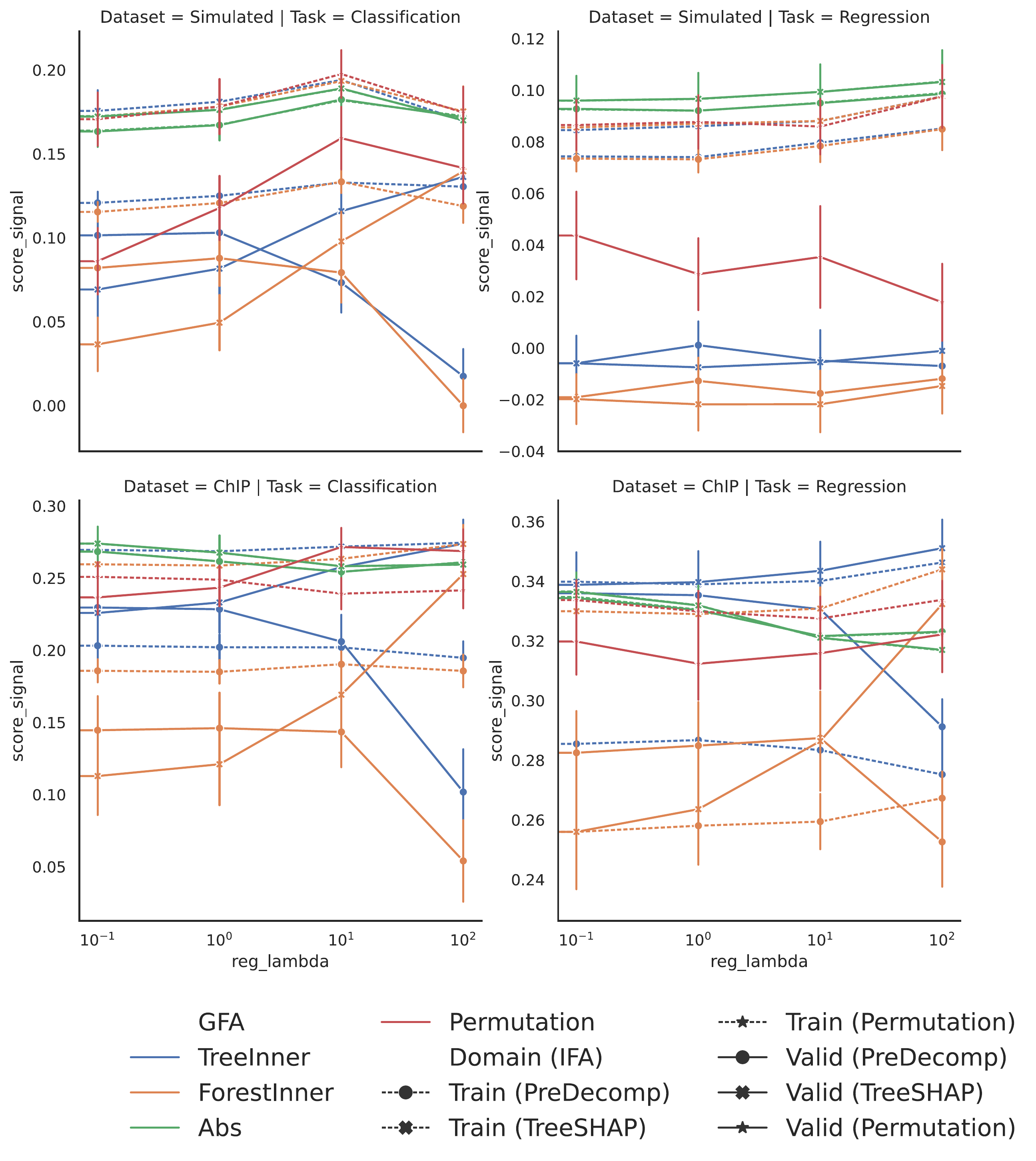}
\vspace{.3in}
\caption{
The average normalized score for the relevant features, averaged across 20 replications.
The error bars correspond to one standard error.
Lower is better.
}
\label{fig:score-signal-reg_lambda}
\end{figure}

\begin{figure}[ht]
\vspace{.3in}
\includegraphics[width=\textwidth]{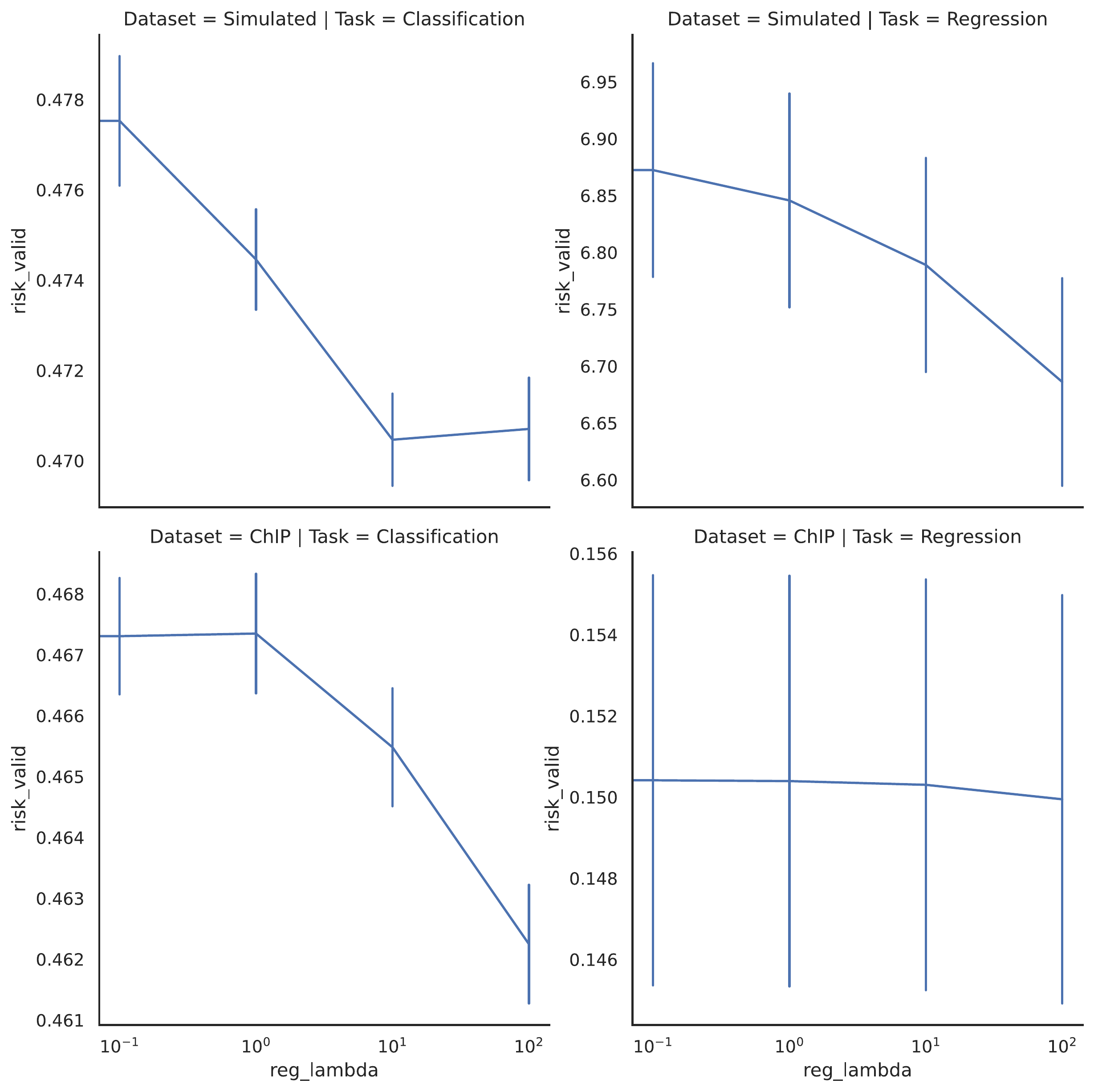}
\vspace{.3in}
\caption{
Risk evaluated on the validation set.
The error bars correspond to one standard error.
Lower is better.
}
\label{fig:risk-reg_lambda}
\end{figure}

\vfill

\end{document}